\title{Isometric 3D Adversarial Examples in the\\ Physical World}
\renewcommand\footnotemark{}
\author{
Yibo Miao$^{1}$, Yinpeng Dong$^{2,3\dagger}$, Jun Zhu$^{2,3,4,5}$, Xiao-Shan Gao$^{1\dagger}$ \thanks{$^{\dagger}$Corresponding authors. }\\
  $^{1}$ KLMM, UCAS, Academy of Mathematics and Systems Science,\\ Chinese Academy of Sciences, Beijing 100190, China \\
  $^{2}$ Dept. of Comp. Sci. \& Tech., Institute for AI, Tsinghua-Bosch Joint ML Center,\\
  THBI Lab, BNRist Center, Tsinghua University, Beijing 100084, China \\ $^{3}$ RealAI \hspace{1ex} $^{4}$ Peng Cheng Laboratory \hspace{1ex} $^{5}$ Pazhou Laboratory (Huangpu), Guangzhou, China \\
  \footnotesize{\texttt{yibomiao21@163.com,}} 
  \footnotesize{\texttt{\{dongyinpeng, dcszj\}@tsinghua.edu.cn,}} 
  \footnotesize{\texttt{xgao@mmrc.iss.ac.cn}}
}
\begin{document}

\maketitle

\begin{abstract}

3D deep learning models are shown to be as vulnerable to adversarial examples as 2D models. However, existing attack methods are still far from stealthy and suffer from severe performance degradation in the physical world. Although 3D data is highly structured, it is difficult to bound the perturbations with simple metrics in the Euclidean space. In this paper, we propose a novel $\epsilon$-isometric ($\epsilon$-ISO) attack to generate natural and robust 3D adversarial examples in the physical world by considering the geometric properties of 3D objects and the invariance to physical transformations. For naturalness, we constrain the adversarial example to be $\epsilon$-isometric to the original one by adopting the Gaussian curvature as a surrogate metric guaranteed by a theoretical analysis. 
For invariance to physical transformations, we propose a maxima over transformation (MaxOT) method that actively searches for the most harmful transformations rather than random ones to make the generated adversarial example more robust in the physical world. Experiments on typical point cloud recognition models validate that our approach can significantly improve the attack success rate and naturalness of the generated 3D adversarial examples than the state-of-the-art attack methods.
\end{abstract}

\section{Introduction}
Deep neural networks (DNNs) have achieved unprecedented performance on numerous tasks, including 2D image classification~\cite{krizhevsky2012imagenet,he2016deep,huang2017densely} and 3D point cloud recognition~\cite{qi2017pointnet,qi2017pointnet++,wang2019dynamic}. However, DNNs are vulnerable to adversarial examples~\cite{szegedy2013intriguing,goodfellow2014explaining} --- inputs 
crafted by adding imperceptible perturbations to original examples that can cause misclassification of the victim model. Adversarial examples are prevalent in various domains beyond images, including texts~\cite{jin2020bert}, speeches~\cite{yuan2018commandersong} and 3D objects~\cite{xiang2019generating}. As deep 3D point cloud recognition has been adopted in  safety-critical applications, such as autonomous driving~\cite{chen2017multi,yue2018lidar}, robotics~\cite{varley2017shape,zhong2020reliable}, medical image processing~\cite{taha2015metrics}, it has garnered increasing
attention to studying the adversarial robustness of 3D point cloud recognition models~\cite{chu2022tpc}.

However, the existing adversarial attacks on point cloud recognition models are still far from stealthy and suffer from drastic performance degeneration in the physical world. There is usually a trade-off between the stealthiness and the real-world attacking performance, making it challenging to achieve the best of both worlds. Early methods~\cite{yang2019adversarial,xiang2019generating,liu2019extending} adopt gradient-based attacks to add, remove, and modify points, but they are limited to digital-world attacks.
The KNN attack~\cite{tsai2020robust} and the $GeoA^3$ attack~\cite{wen2020geometry} constrain the smoothness of the adversarial point clouds and reconstruct adversarial meshes from the point clouds that can be 3D-printed in the physical world. Although these works demonstrate successful physical attacks, point cloud reconstruction introduces large noises and errors, resulting in low attack success rates and unnaturalness of the adversarial objects in the physical world. Mesh Attack~\cite{zhang20213d} is recently proposed to perturb the mesh representation of 3D objects, which improves the success rate but often creates large distortions that can be easily detected by humans as anomalies, as illustrated in Fig.~\ref{fig:fig1}. Overall, it is difficult to achieve both the naturalness and effectiveness of 3D adversarial attacks in the physical world, which we think is largely due to the lack of an appropriate metric to characterize the naturalness of 3D data.

To address these issues, we propose an $\epsilon$-isometric ($\boldsymbol{\epsilon}$\textbf{-ISO}) attack method to generate natural and robust 3D adversarial examples in the physical world against point cloud recognition models. The $\epsilon$-ISO attack improves the naturalness of the 3D adversarial example by constraining it to be $\epsilon$-isometric (see Definition~\ref{def:1}) to the original one, which guarantees the consistency between the intrinsic geometric properties of two 3D objects~\cite{do2016differential}. We theoretically demonstrate that Gaussian curvature (see Definition~\ref{def:2}) can be used to provide a sufficient condition to ensure that two surfaces are $\epsilon$-isometric. Due to the computable and differentiable nature of Gaussian curvature, we adopt it as a new regularization loss to practically generate natural 3D adversarial examples. To improve the robustness of 3D adversarial examples under physical transformations, we further propose a maxima over transformation (\textbf{MaxOT}) method that actively searches for the most harmful transformations rather than random ones~\cite{athalye2018synthesizing} for optimization. Armed with Bayesian optimization that provides better initialization of the transformations, MaxOT is able to find a set of diverse worst-case transformations, leading to improved performance of the 3D adversarial examples in the physical world.

We conduct extensive experiments to evaluate the performance of our method on attacking typical point cloud recognition models~\cite{qi2017pointnet,qi2017pointnet++,wang2019dynamic}. Results demonstrate that, in comparison with the alternative state-of-the-art attack methods~\cite{tsai2020robust,wen2020geometry,zhang20213d}, $\epsilon$-ISO attack achieves higher success rates, while making the generated adversarial examples more natural and robust under physical transformations. A physical-world experiment is conducted by 3D-printing the adversarial meshes and re-scanning the objects for evaluation, which also validates the effectiveness of our method.

\begin{figure}[t]
\centering
\includegraphics[width=0.9\columnwidth]{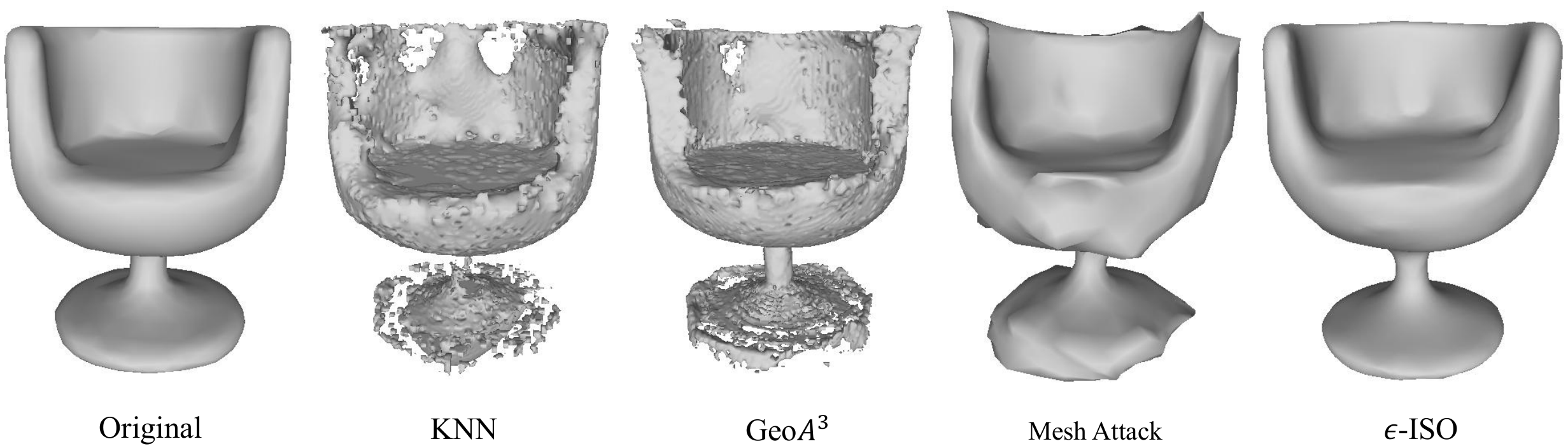}
\caption{An illustration of adversarial objects crafted by KNN attack~\cite{tsai2020robust}, $GeoA^3$ attack~\cite{wen2020geometry}, Mesh Attack~\cite{zhang20213d} and our $\epsilon$-ISO attack against the PointNet model: KNN attack and $GeoA^3$ attack 
can produce unnatural adversarial objects (and often low success rates); Mesh Attack 
can generate a lot of distortions; while $\epsilon$-ISO attack improves the naturalness of the 3D adversarial sample and
ensures the consistency between the intrinsic geometric properties of the adversarial and original 3D objects~\cite{do2016differential}.}
\label{fig:fig1}
\end{figure}

\section{Related work}
\label{gen_inst}

\textbf{Deep learning on 3D point clouds.} Deep 3D point cloud recognition~\cite{qi2017pointnet,goyal2021revisiting,xiang2021walk,xu2021paconv,yang2018foldingnet,rao2020global,te2018rgcnn} has emerged in recent years with various applications in many fields, such as 3D object classification~\cite{su2015multi,lei2020spherical,yu2018multi,zaheer2017deep}, 3D scene segmentation~\cite{graham20183d,wang2018sgpn,xu2020grid,hu2021learning}, and 3D object detection in autonomous driving~\cite{zhu2021cylindrical,yang2019learning}. One of the pioneering works is PointNet~\cite{qi2017pointnet}, which directly applies a multilayer perceptron to learn point features and aggregates them in an efficient way using a max-pool module. PointNet++~\cite{qi2017pointnet++} and a large number of later works~\cite{duan2019structural,liu2019densepoint,yang2019modeling} are built on PointNet to further capture fine-grained local structure information from the neighborhood of each point. Recently, some works have focused on designing special convolutions on 3D domains~\cite{atzmon2018point,li2018pointcnn,liu2019relation,thomas2019kpconv} or developing graph neural networks~\cite{gao2020graphter,shen2018mining,simonovsky2017dynamic,wang2019dynamic} to improve point cloud recognition.

\textbf{3D adversarial attacks.} Following the previous studies on adversarial machine learning in the 2D image domain~\cite{szegedy2013intriguing,Kurakin2016,carlini2017towards,Dong2017,gilmer2018motivating,yu2022adversarial,dong2020adversarial,dong2021exploring}, many works~\cite{xiang2019generating,cao2019adversarial,liu2022point} apply adversarial attacks to the 3D point cloud domain.
Xiang et al.~\cite{xiang2019generating} proposed point generation attacks by adding a limited number of synthetic points to the point cloud. Recently, more studies~\cite{wicker2019robustness,zheng2019pointcloud} use gradient-based attack methods to identify key points from the point cloud for deletion. More point perturbation attacks~\cite{hamdi2020advpc,ma2020efficient,zhang2019defense,dong2020self} learn to perturb the xyz coordinates of each point through a C$\&$W framework~\cite{carlini2017towards} based on metrics defined in the Euclidean space. 
Zhao et al.~\cite{zhao2020isometry} attack by the isometric transformations in the Euclidean space such as rotation. It is worth noting that we consider isometric mappings between surfaces, which is essentially different from~\cite{zhao2020isometry}.
Later works~\cite{lang2021geometric,zamorski2020adversarial} further apply iterative gradient methods to achieve more advanced adversarial perturbations.
Besides, other works consider generative models~\cite{zhou2020lg}, 3D data attacks~\cite{tu2020physically,sun2020adversarial}, adversarial robustness~\cite{zhang2022pointcutmix,sun2021adversarially}, attacks against LIDAR~\cite{lee2020shapeadv,kim2021minimal}, autonomous driving~\cite{sun2020towards,zhang2022adversarial}, backdoor attacks~\cite{li2021pointba,xiang2021backdoor}, etc., in the 3D domain.
However, the existing attacks on 3D point cloud recognition are still far from stealthy and the only three methods that consider the physical-world attacks~\cite{tsai2020robust,wen2020geometry,zhang20213d} are not very effective. In this paper, we surpass the performance of previous methods by proposing a novel $\epsilon$-isometric ($\epsilon$-ISO) attack method to generate natural and robust 3D adversarial examples in the physical world.

\section{Methodology}
\label{headings}

We now formally present $\boldsymbol{\epsilon}$\textbf{-ISO attack}. We first present the general problem formulation, and then describe how $\epsilon$-ISO attack enhances the imperceptibility and robustness of the generated 3D adversarial samples, respectively.

\subsection{Problem formulation}

To generate 3D adversarial objects in the physical world, it is more straightforward to perturb the mesh representation of 3D objects rather than point clouds~\cite{zhang20213d} since the reconstruction process can incur large errors~\cite{tsai2020robust,wen2020geometry}. A mesh $ \mathcal{M} = ( \mathcal{V} , \mathcal{F} )$ is an approximate shape representation of its underlying surface, where
$\mathcal{V}:=\{ v_{ i } \} _ { i = 1 } ^ { n _ { v } }$ is the set of $n _ { v }$ vertices of $xyz$ coordinates, 
and $\mathcal{F}:=\{ z _ { i } \} _ { i = 1 } ^ { n _ { f } }$ is the set of $n _ { f }$ triangle faces represented by  the indices of vertices.
We let $S$ denote a differentiable sampling process such that $\mathcal{P} := S(\mathcal{M}) \in \mathcal{X}$ is the point cloud obtained by randomly sampling on the mesh surface, where $\mathcal{X}$ is the set of all point clouds.
We let $y \in \mathcal{Y}$ denote the corresponding ground-truth label of $\mathcal{M}$ as well as $\mathcal{P}$.

In this paper, we focus on the challenging \emph{targeted attacks} against deep 3D point cloud classification models~\cite{qi2017pointnet,qi2017pointnet++,wang2019dynamic}. Given a  point cloud classifier $f: \mathcal{X} \rightarrow \mathcal{Y}$,
the goal of the attack is to generate an adversarial mesh $ \mathcal{M}_{a d v}=\left( \mathcal{V}_{a d v},  \mathcal{F}\right)$ for the original one $\mathcal{M}$ with vertex perturbations such that the sampled point cloud $\mathcal{P}_{a d v} := S(\mathcal{M}_{a d v})$ will be misclassified by $f$ as the target class $y^*(\neq y)$. In general, the perturbation should be small to make the adversarial mesh $\mathcal{M}_{adv}$ inconspicuous under human inspection. Thus, the optimization problem of generating the adversarial mesh can be generally formulated as
\begin{equation}\label{eq:obj}
\min _{\mathcal{M}_{a d v}} \mathcal{L}_{f}\left(S\left(\mathcal{M}_{a d v}\right),y^* \right) +\beta\cdot \mathcal{R}\left(\mathcal{M}_{a d v}, \mathcal{M}\right),
\end{equation}
where $\mathcal{L}_{f}$ is the loss that facilitates the misclassification of $\mathcal{P}_{a d v}$ to $y^*$, $\mathcal{R}$ is the regularization term that minimizes a perceptibility distance between $\mathcal{M}_{a d v}$ and $\mathcal{M}$, and $\beta$ is a balancing hyperparameter between these two losses. 
In this paper, we try to develop a stealthy and robust attack method by proposing a new regularization loss $\mathcal{R}$ based on Gaussian curvature with theoretical guarantees to remain the naturalness as well as a new attacking loss $\mathcal{L}_f$ to enhance the robustness of the generated 3D adversarial objects under physical transformations. 
$\mathcal{R}$ and $\mathcal{L}_{f}$ will be introduced in the following.

\subsection{$\boldsymbol{\epsilon}$-ISO attack}

Most of the existing 3D adversarial attacks only consider the constraints $\mathcal{R}$ defined in the Euclidean space~\cite{fan2017point,yang2018foldingnet,groueix1802atlasnet}. The generated adversarial examples have noticeable point outliers that cause spikes to appear on the object's surface, thus losing the naturalness. Moreover, the outliers are more easily removed and defended against. The main reason is that the existing methods do not consider the geometric properties of the 3D objects. In differential geometry, isometric mapping guarantees the consistency of the intrinsic geometric features of two objects~\cite{do2016differential}. Therefore, we propose a constraint loss $\mathcal{R}$ based on $\epsilon$-isometric mapping to restrict the naturalness of 3D adversarial objects.
We first give the definition of $\epsilon$-isometric below.

\newtheorem{definition}{Definition}
\begin{definition}\label{def:1}
Let $S$ and $\tilde { S }$ denote two surfaces of $\mathbb{R}^{3}$. A diffeomorphism $\varphi: {S} \rightarrow \tilde { S }$ is called an $\epsilon$-isometric mapping if there exists a constant $n$ such that it takes any local curve $C$ in $S$ to a curve $\tilde { C }  = \varphi(C)$ in $\tilde { S }$ satisfying $(1-n\epsilon) s(C)<s(\tilde { C })<(1+n\epsilon) s(C)$ where $s ( \cdot )$ is the length. The surfaces $S$ and $\tilde { S }$ are then said to be $\epsilon$-isometric.
\end{definition}

As shown in Fig.~\ref{fig:fig2}, Fig.~\ref{fig:fig2}(a) is the original mesh, Fig.~\ref{fig:fig2}(b) is the adversarial mesh generated by KNN attack and $GeoA^3$ attack, and Fig.~\ref{fig:fig2}(c) is the adversarial mesh generated by Mesh Attack. These three methods only consider the constraints defined in the Euclidean space, and the curve lengths of the generated adversarial samples differ greatly from those of the original samples, which are not $\epsilon$-isometric and lose naturalness. Fig.~\ref{fig:fig2}(d) is the adversarial mesh generated by our proposed $\epsilon$-ISO attack. 
We consider the geometric features of 3D objects to generate natural adversarial examples by constraining them to be $\epsilon$-isometric to the original examples (i.e., the curve length of the resulting adversarial examples varies very little). However, it is intractable to directly optimize the adversarial mesh to be $\epsilon$-isometric as the original one. 
Therefore, we give the definition of the Gaussian curvature of the surface from ~\cite{do2016differential}.

\begin{figure}[t]
\centering
\includegraphics[width=1\columnwidth]{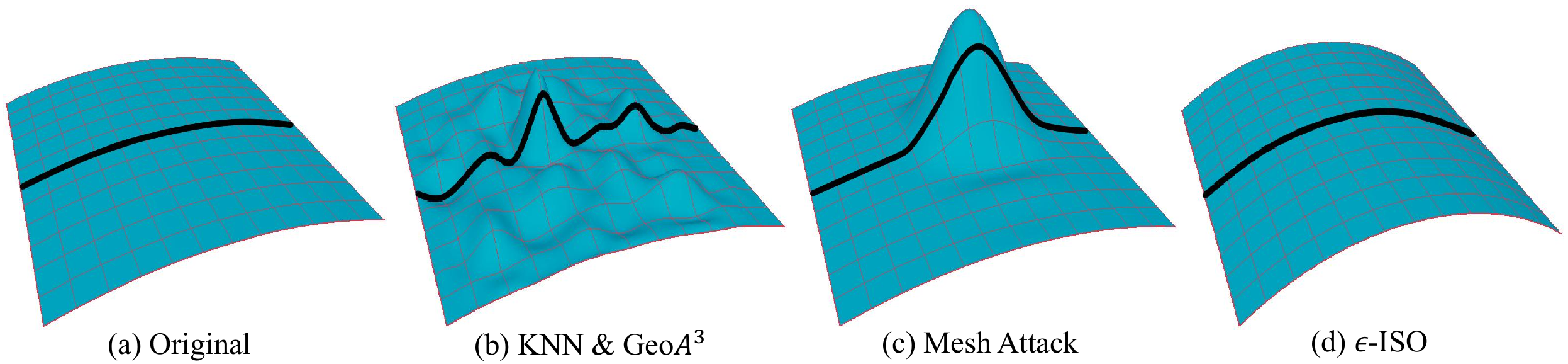}
\caption{An illustration of $\epsilon$-isometric attack. (a): Original mesh. (b) and (c): Adversarial meshes generated by KNN \& $GeoA^3$ attack and Mesh Attack, respectively. They consider only the constraints defined in the Euclidean space, and the curve lengths (shown as the black curves) of the generated adversarial examples differ significantly from those of the original samples, which do not satisfy $\epsilon$-isometric and lose naturalness. (d): Adversarial mesh generated by $\epsilon$-ISO. We consider the geometric features of 3D objects and constrain the 3D adversarial example to be $\epsilon$-isometric to the original one, such that the curve lengths of the generated adversarial samples vary little and have naturalness.}
\label{fig:fig2}
\end{figure}

\begin{definition}\label{def:2}
Let $S$ be a surface of $\mathbb{R}^{3}$ parameterized by $\boldsymbol{r}:=\boldsymbol{r}(u, v)=[x(u, v), y(u, v), z(u, v)]$, where $(u, v) \in \mathbb{R}^{2}$. We let $\boldsymbol{r}_u, \boldsymbol{r}_v$ denote the partial derivatives of $\boldsymbol{r}$ w.r.t. $u$ and $v$, $\boldsymbol{r}_{uu}, \boldsymbol{r}_{uv}, \boldsymbol{r}_{vv} $ denote the second partial derivatives of $\boldsymbol{r}$, and $\wedge, \left\langle\cdot,\cdot\right\rangle$ denote the outer product and inner product, respectively.
The parametrization thus defines unit normal vectors $\boldsymbol{n}:=\frac{\boldsymbol{r}_{u} \wedge \boldsymbol{r}_{v}}{\left|\boldsymbol{r}_{u} \wedge \boldsymbol{r}_{v}\right|}$ of the surface $S$.
We denote the eigenvalues of the coefficient matrix of the Weingarten map $\left[\begin{array}{ll}
L & M \\
M & N
\end{array}\right]\left[\begin{array}{ll}
E & F \\
F & G
\end{array}\right]^{-1}$ as $k_{1}$ and $k_{2}$,
where $E=\left\langle\boldsymbol{r}_{u}, \boldsymbol{r}_{u}\right\rangle$, $F=\left\langle\boldsymbol{r}_{u}, \boldsymbol{r}_{v}\right\rangle$ and $G=\left\langle\boldsymbol{r}_{v}, \boldsymbol{r}_{v}\right\rangle$ are coefficients of the first fundamental form and $L=\left\langle\boldsymbol{r}_{uu}, \boldsymbol{n}\right\rangle$, $M=\left\langle\boldsymbol{r}_{uv}, \boldsymbol{n}\right\rangle$ and $N=\left\langle\boldsymbol{r}_{vv}, \boldsymbol{n}\right\rangle$ are coefficients of the second fundamental form.
The Gaussian curvature is defined as $K=k_{1} k_{2}=\frac{L N-M^{2}}{E G-F^{2}}$.
\end{definition}

\textbf{Remark 1.} The Gaussian curvature intrinsically measures the bending degree of the surface reflected by the Gaussian mapping. Let the area element of the surface $S$ be $d A=\left\langle \boldsymbol{r}_{u} \wedge \boldsymbol{r}_{v}, \boldsymbol{n}\right\rangle d u d v $ and the area element under the Gaussian mapping $g: \begin{array}{l}
S \in \mathbb{R}^{3} \rightarrow S^{2} \\
\boldsymbol{r}(u, v) \rightarrow \boldsymbol{n}(u, v)
\end{array}$ be $d A^{\prime}=\left\langle \boldsymbol{n}_{u} \wedge \boldsymbol{n}_{v}, \boldsymbol{n}\right\rangle d u d v$. From $\boldsymbol{n}_{u} \wedge \boldsymbol{n}_{v}=K \boldsymbol{r}_{u} \wedge \boldsymbol{r}_{v}$ (proof in Appendix D), we obtain
\begin{equation}\label{eq:2}
\lim _{D \rightarrow P} \frac{\text { Area }(g(D))}{\text { Area }(D)}=\lim _{D \rightarrow P} \frac{\int_{g(D)} d A^{\prime}}{\int_{D} d A}=\lim _{D \rightarrow P} \frac{\int_{D} K d A}{\int_{D} d A}=K(P).
\end{equation}
Eq.~\eqref{eq:2} illustrates that the geometric meaning of Gaussian curvature is the ratio of the area of the domain at the point $P$ on the surface $S$ and the area of the domain at the corresponding point under the Gaussian mapping, i.e., the bending degree of the surface reflected by the Gaussian mapping.

Based on Definitions~\ref{def:1} and \ref{def:2}, we have the following theorem.

\newtheorem{theorem}{Theorem}
\begin{theorem}[proof in Appendix A]\label{the:real1}
Let $S$ and $\tilde { S }$ denote two surfaces of $\mathbb{R}^{3}$;
$\varphi: {S} \rightarrow \tilde { S }$ denote a diffeomorphism that takes a point $v$ in $S$ to  point $v ^ { \prime }=\varphi(v)$ in $\tilde { S }$; and $K ( \cdot )$ be the Gaussian curvature of the points. 
If $|K(v)-K(v^{\prime })|<\epsilon$ for any point $v$, then the surfaces $S$ and $\tilde { S }$ are $\epsilon$-isometric.
\end{theorem}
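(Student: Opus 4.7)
The strategy is to reduce the hypothesis on Gaussian curvatures to a comparison of first fundamental forms by working in geodesic polar coordinates, and then read off the arc-length distortion directly from the coordinate expression. This exploits the intrinsic nature of $K$ (Theorema Egregium) in the sharpest possible local chart: the one in which $K$ alone, through the Jacobi equation, determines the metric.

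Concretely, I would fix a point $P\in S$ and introduce geodesic polar coordinates $(r,\theta)$ on a normal neighborhood of $P$. There the first fundamental form is $ds^2 = dr^2 + h(r,\theta)^2\, d\theta^2$, and $h=\sqrt{G}$ satisfies the Jacobi ODE
\begin{equation*}
\partial_r^2 h + K(r,\theta)\, h = 0,\qquad h(0,\theta)=0,\quad \partial_r h(0,\theta)=1.
\end{equation*}
I would then set up geodesic polar coordinates $(\tilde r,\tilde\theta)$ on $\tilde S$ centered at $\varphi(P)$, in which the metric takes the form $d\tilde s^2 = d\tilde r^2 + \tilde h^2\, d\tilde\theta^2$ with $\tilde h$ solving the analogous Jacobi equation driven by $\tilde K$. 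Identifying the two parameter domains via $(r,\theta)\leftrightarrow(\tilde r,\tilde\theta)$ defines a diffeomorphism between local patches; the hypothesis $|K(v)-K(\varphi(v))|<\epsilon$ gives $|K(r,\theta)-\tilde K(r,\theta)|<\epsilon$ pointwise in the common chart.

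The next step is an ODE comparison. Writing the equation for the difference $\delta = h-\tilde h$ as $\partial_r^2 \delta + K\delta = (\tilde K - K)\tilde h$ and using the Grönwall / variation-of-constants representation, I would obtain a bound $|h-\tilde h|\leq C\epsilon$ on a compact normal neighborhood, where $C$ depends only on the local diameter and an a priori $L^\infty$ bound on $K$. For any local curve $C:t\mapsto(r(t),\theta(t))$ the two arc-lengths are
\begin{equation*}
s(C)=\int\sqrt{\dot r^2+h^2\dot\theta^2}\,dt,\qquad s(\tilde C)=\int\sqrt{\dot r^2+\tilde h^2\dot\theta^2}\,dt,
\end{equation*}
and the elementary inequalities $|\sqrt{a^2+b^2}-\sqrt{a^2+c^2}|\leq|b-c|$ and $|h^2-\tilde h^2|\leq 2\max(h,\tilde h)|h-\tilde h|$ turn the pointwise bound into a multiplicative one, $|s(\tilde C)-s(C)|\leq n\epsilon\cdot s(C)$, which is exactly Definition~\ref{def:1}. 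Uniformity of the constant $n$ over the whole surface follows by covering $S$ with finitely many normal neighborhoods and taking a maximum.

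The main obstacle I anticipate is reconciling the coordinate identification with the given diffeomorphism $\varphi$: the exponential maps at $P$ and $\varphi(P)$ produce geodesic polar charts that $\varphi$ need not intertwine, so one cannot simply assume the Jacobi ODEs are compared at points paired by $\varphi$. Two ways out are plausible. Either one verifies that the chart identification itself serves as an $\epsilon$-isometric diffeomorphism (which is enough for Definition~\ref{def:1}, an existence statement), or one argues that, since $\varphi$ is a diffeomorphism and $K$ is continuous, the hypothesis $|K-K\circ\varphi|<\epsilon$ transfers to the chart identification up to lower-order terms that can be absorbed into the constant $n$. A secondary nuisance is the boundary behavior of $h$ at $r=0$, where $h$ vanishes and the comparison needs to be carried out carefully against the singular initial conditions of the Jacobi equation.
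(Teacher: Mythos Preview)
Your proposal follows the same overall scaffold as the paper: work in geodesic polar coordinates, use that the first fundamental form reduces to $ds^2=d\rho^2+G\,d\theta^2$ with $\sqrt{G}$ governed by the Jacobi equation $(\sqrt{G})_{\rho\rho}+K\sqrt{G}=0$ and the initial conditions $\sqrt{G}\to 0$, $(\sqrt{G})_\rho\to 1$ as $\rho\to 0$, compare the two $G$'s under the hypothesis on $|K-\tilde K|$, and translate this into an arc-length bound. Where you diverge is in the comparison step. The paper does not run a Gr\"onwall/variation-of-constants argument; instead it treats $K$ as (locally) constant, writes down the closed-form solutions $\sqrt{G}=\frac{1}{\sqrt{K}}\sin(\sqrt{K}\rho)$, $\sqrt{G}=\rho$, $\sqrt{G}=\frac{1}{\sqrt{-K}}\sinh(\sqrt{-K}\rho)$ in the three sign cases, and then Taylor-expands $\frac{1}{K}\sin^2(\sqrt{K}\rho)$ and $-\frac{1}{K}\sinh^2(\sqrt{-K}\rho)$ to order $\rho^4$ to obtain $|G-\tilde G|<\frac{\rho^4}{3}\epsilon$, hence $|ds^2-d\tilde s^2|<\frac{\rho^4}{3}\,d\theta^2\,\epsilon$, and from there the length inequality. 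Your Gr\"onwall route is the more robust one: it applies to non-constant $K$ without the extra constant-curvature idealization and without truncating Taylor series, at the cost of a less explicit constant. Conversely, the paper's approach yields a concrete $\frac{\rho^4}{3}$ factor but only under an implicit constant-curvature assumption that is not stated in the theorem.

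The obstacle you flag---that the geodesic-polar chart identification need not coincide with the given diffeomorphism $\varphi$, so one is really proving existence of \emph{some} $\epsilon$-isometric map rather than that $\varphi$ itself is one---is real, and the paper's proof simply does not address it. Your first suggested resolution (observe that Definition~\ref{def:1} is an existence statement, so the chart identification already suffices) is exactly the right reading; the paper proceeds as if this were obvious. So your plan is sound and, on this point, more careful than the published argument.
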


Theorem~\ref{the:real1} indicates that to make two surfaces $\epsilon$-isometric, one can constrain their Gaussian curvatures. Since the Gaussian curvature is computable and differentiable w.r.t. vertices, we adopt it to constrain the naturalness of 3D adversarial meshes as 
\begin{equation}\label{eq:3}
\mathcal{R}_{Gauss }\left(\mathcal{M}_{a d v}, \mathcal{M}\right)=\frac{1}{n_v} \sum_{v\in\mathcal{V},{v}^{\prime}=\varphi(v)\in\mathcal{V}_{adv}}\left\|K\left({v}^{\prime}\right)-K({v})\right\|_{2}^{2},
\end{equation}
where $\varphi(\cdot)$ is the corresponding mapping between vertices in $\mathcal{V}$ and $\mathcal{V}_{adv}$.
We follow the Gauss-Bonnet formula~\cite{cohen2003restricted} to calculate the Gaussian curvature of the vertices as
\begin{equation}
K(v)=\frac{2 \pi-\sum_{i \in N(v)} \theta_{i}(v)}{A(v)},
\end{equation}
where $A(\cdot)$ is the area of the vertex neighborhood, i.e., the area of the polygonal region joined by the consecutive midpoints of triangles incident on the vertex of interest, $N(v)$ is the set of faces containing $v$, and $\theta_{i}(v)$ is the interior angle of the face at vertex $v$. Note that the value of $\sum_{i \in N(v)} \theta_{i}(v)$ for a plane is $2 \pi$ and the Gaussian curvature is $0$. The more curved the surface, the smaller the value of $\sum_{i \in N(v)} \theta_{i}(v)$ and the larger the Gaussian curvature.

In addition, we prevent the generated adversarial meshes from self-intersecting by using the Laplace loss~\cite{field1988laplacian}, denoted as $\mathcal{R}_{Lap}\left(\mathcal{M}_{a d v}\right)$, which represents the distance between a vertex and its nearest neighbor's center of mass, and the edge length loss~\cite{wang2018pixel2mesh}, denoted as $\mathcal{R}_{edge}\left(\mathcal{M}_{a d v}\right)$, which represents the smoothness of the surface. Thus, the overall regularization term can be expressed as:
\begin{equation}
\mathcal{R}\left(\mathcal{M}_{a d v}, \mathcal{M}\right)= \lambda_{1}\cdot\mathcal{R}_{Gauss }\left(\mathcal{M}_{a d v}, \mathcal{M}\right)+\lambda_{2}\cdot \mathcal{R}_{Lap}\left(\mathcal{M}_{a d v}\right)+\lambda_{3}\cdot \mathcal{R}_{edge}\left(\mathcal{M}_{a d v}\right),
\end{equation}
where $\lambda_{1}$, $\lambda_{2}$ and $\lambda_{3}$ are balancing hyperparameters.

\subsection{Improving the robustness under physical transformations}

Besides concerning the naturalness of 3D adversarial examples, we further enhance their robustness under physical transformations, such as 3D rotations, affine projections, cutouts, etc. A common method is the expectation over transformation (EOT) algorithm~\cite{athalye2018synthesizing}, which optimizes the adversarial example over the distribution of different transformations.
However, it is still challenging to maintain the attacking performance under various physical transformations. As shown in the experiments, after using the EOT algorithm, there are still some transformations that the generated adversarial examples are not robust to, leading to a reduction of the attack success rate. 

To address this issue, our key insight is to consider the worst-case transformations rather than their expectation, since if the adversarial examples are resistant to the most harmful physical transformations, they can also resist much weaker transformations, inspired by adversarial training~\cite{madry2017towards}. Therefore, we propose a \textbf{maxima over transformation (MaxOT)} algorithm to actively search for physical transformations that maximize the 
misclassification loss. The loss function $\mathcal{L}_{f}$ is thus formulated as:
\begin{equation}\label{eq:maxot}
\mathcal{L}_f(S(\mathcal{M}_{adv}), y^*) =  \max _{T^{*} \subset T} \mathbb{E}_{t \in T^{*}}\mathcal{L}_{ce}\left(t(S(\mathcal{M}_{adv})), y^*\right),
\end{equation}
where $T$ contains all possible transformations,  $T^{*}$ is a subset of transformations in $T$, and $\mathcal{L}_{ce}$ is the cross-entropy loss. Note that in Eq.~\eqref{eq:maxot} we consider a subset of transformations $T^*$ rather than a single one because the loss landscape w.r.t. transformations is largely non-convex and contains many local maxima~\cite{engstrom2019exploring}. Thus we aim to find a set of 
diverse worst-case transformations. By integrating Eq.~\eqref{eq:maxot} into Eq.~\eqref{eq:obj}, it forms a minimax optimization problem, where the inner maximization aims to find physical transformations that maximize the cross-entropy loss, while the outer minimization aims to optimize the adversarial example with the worst-case transformations.

\subsubsection{Bayesian optimization}

To solve problem~\eqref{eq:maxot}, we search for the worst-case transformations one by one. Given an initialized transformation, we perform gradient-based optimization to update the transformation parameters (e.g., angles for rotations). However, randomly selecting initialized transformations is ineffective since the random initialization may drop into regions of weak transformations, which limits the exploration of the space of all transformations. To address this issue, we propose to adopt the Bayesian optimization~\cite{frazier2018tutorial,snoek2012practical} to better break the dilemma between exploration and exploitation to find more appropriate initialized transformations.

Bayesian optimization is an efficient method for solving global optimization problems consisting of two key components: a surrogate model, such as a Gaussian process (GP)~\cite{rasmussen2003gaussian} or a Bayesian neural network (BNN)~\cite{kononenko1989bayesian}, which models the unknown objective; and an acquisition function $\alpha ( \cdot )$, which is maximized by balancing exploitation and exploration to recommend the next query location.
We choose GP as a surrogate model, which provides a Bayesian posterior distribution to describe the unknown function $\mathcal{L}_{ce}\left(t(S(\mathcal{M}_{adv})), y^*\right)$.
We use the expected improvement (EI)~\cite{jones1998efficient,mockus1978application} acquisition function, which measures the expected improvement of each point with respect to the current best value. Then, the unknown function is sampled by maximizing the acquisition function to better explore the space of all transformations by selecting the next query point in the region where the prediction is high and the model is very uncertain.

As shown in the overall algorithm (in Appendix B), we update the Bayesian posterior distribution of the objective $f$ using the observations obtained from the previous iterations in the gradient descent process. Then, we maximize the EI acquisition function $\alpha_{E I}\left(t ; D_{n}\right)$ to find the initial transformation. 
From the initialization, we optimize the transformation parameters by gradient-based method to solve problem~\eqref{eq:maxot}. This process is repeated for the number of transformations in the MaxOT algorithm.

\section{Experiments}

\subsection{Experimental setup}\label{sec:setting}
\textbf{Dataset.} We use the ModelNet40~\cite{wu20153d} dataset in our experiments. This dataset contains 12,311 CAD models with 40 common object semantic categories in the real world. We use the official split~\cite{qi2017pointnet,qi2017pointnet++}, where 9,843 examples are used for training and the remaining 2,468 examples are used for testing. We follow~\cite{zhang20213d} and get closed manifolds. For adversarial attacks, we follow ~\cite{wen2020geometry} and take all the instances of the 40 categories that are well classified in the ModelNet40 testing set.

\textbf{Victim models.}  Following~\cite{wen2020geometry,zhang20213d,liu2021imperceptible}, we select three commonly used point cloud classification networks as the victim models, i.e., PointNet~\cite{qi2017pointnet}, PointNet++~\cite{qi2017pointnet++}, and DGCNN~\cite{wang2019dynamic}.

\textbf{Evaluation metrics.} To quantitatively evaluate the effectiveness of our proposed $\epsilon$-ISO attack, we measure the attack success rate (ASR). Besides, to measure the imperceptibility of different attack methods, we use the Chamfer distance $\mathcal{D}_{c}$~\cite{fan2017point} and Gaussian curvature distance $\mathcal{D}_{g}$ (i.e., Eq.~\eqref{eq:3}) as evaluation metrics. We also report the attack success rates on several existing defenses~\cite{yang2019adversarial,zhou2019dup,wu2020if} and under the physical world to further validate the superiority of our $\epsilon$-ISO attack.

\textbf{Implementation details.} We use Adam~\cite{kingma2014adam} to optimize the objective of our proposed $\epsilon$-ISO attack. We use a fixed learning schedule of 250 iterations, where the learning rate and momentum are respectively set as 0.01 and 0.9. We assign the weighting parameters $\lambda_{1}=1.0$, $\lambda_{2}=0.2$ and $\lambda_{3}=0.8$. The balancing parameter $\beta$ is initialized as 1,500 and automatically adjusted by conducting 10 runs of binary search following ~\cite{carlini2017towards}. We uniformly sample 1,024 points from the adversarial mesh.

\begin{figure}[!t]
\centering
\includegraphics[width=0.99\columnwidth]{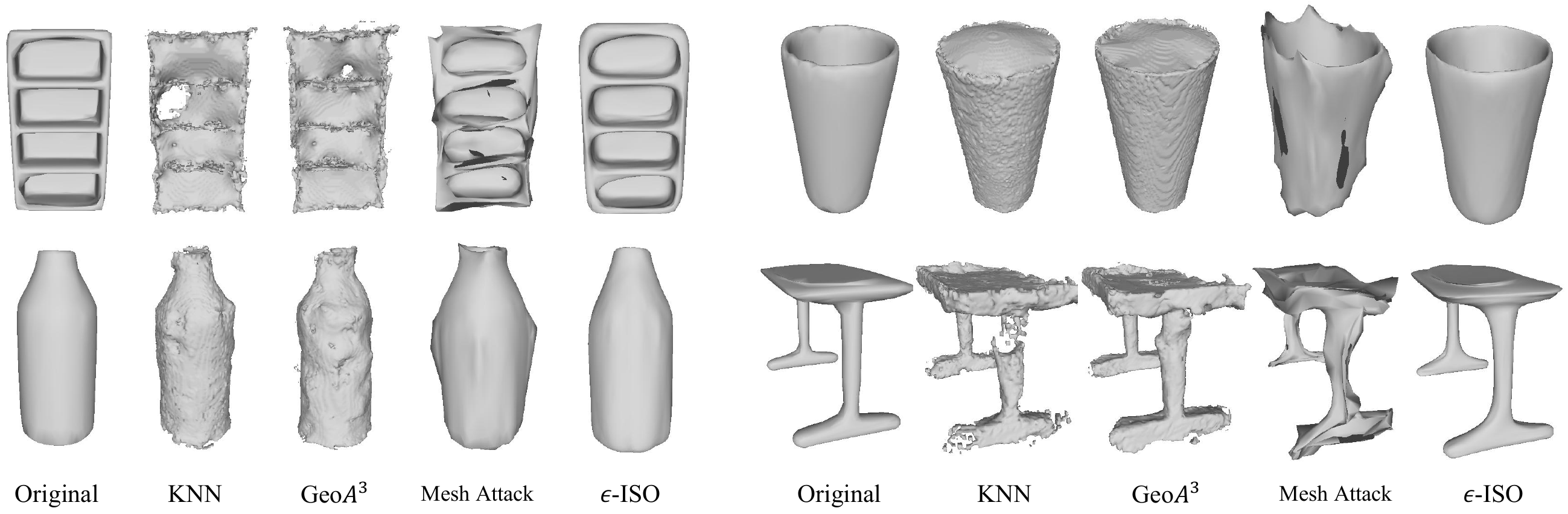}\vspace{-0.5ex}
\caption{Adversarial objects obtained by KNN attack, $GeoA^3$ attack, Mesh Attack, and our  $\epsilon$-ISO attack against the PointNet model. The KNN and $GeoA^3$ attack produce adversarial samples with dense irregular deformations. The Mesh Attack produces distortions and self-intersecting surfaces (i.e., black surfaces). None of them guarantees naturalness, while $\epsilon$-ISO attack is more natural.}
\label{fig:fig3}
\end{figure}

\begin{table}[!t]
\caption{Quantitative results of KNN attack, $GeoA^3$ attack, Mesh Attack, and our proposed $\epsilon$-ISO attack against different models. Our proposed $\epsilon$-ISO attack outperforms all existing methods in terms of attack success rate (ASR) and imperceptibility. We adopt the Chamfer distance $\mathcal{D}_{c}$~\cite{fan2017point} and Gaussian curvature distance $\mathcal{D}_{g}$ as the evaluation metrics.}
  \centering
 \footnotesize
  \begin{tabular}{c||p{6.5ex}<{\centering}|p{6ex}<{\centering}|p{6ex}<{\centering}|p{6.5ex}<{\centering}|p{6ex}<{\centering}|p{6ex}<{\centering}|p{6.5ex}<{\centering}|p{6ex}<{\centering}|p{6ex}<{\centering}}
  \hline
\multirow{2}{*}{Model} & \multicolumn{3}{c|}{PointNet} & \multicolumn{3}{c|}{PointNet++} & \multicolumn{3}{c}{DGCNN}\\
\cline{2-10}
 & ASR & $\mathcal{D}_{c}$ & $\mathcal{D}_{g}$ & ASR & $\mathcal{D}_{c}$ & $\mathcal{D}_{g}$ & ASR & $\mathcal{D}_{c}$ & $\mathcal{D}_{g}$ \\
\hline\hline
KNN & 14.78\% & 0.0034 & 0.0096 & 6.24\% & \bf0.0027 & 0.0122 & 4.17\% & 0.0036 & 0.0125 \\
$GeoA^3$ & 19.65\% & 0.0066 & 0.0037  & 11.20\% & 0.0192 & 0.0031 & 8.24\% & 0.0172 & 0.0042 \\
Mesh Attack & 88.10\% & 0.0054 & 0.0048 & 94.79\% & 0.0045 & 0.0046 & 66.79\% & 0.0051 & 0.0055 \\
\hline
$\epsilon$-ISO & \bf98.45\% & \bf0.0031 & \bf0.0004 & \bf99.58\% & 0.0040 & \bf0.0004 & \bf84.16\% & \bf0.0032 & \bf0.0009\\
\hline
   \end{tabular}
  \label{table:table1}
\end{table}

\begin{figure}[t]
\centering
\includegraphics[width=0.99\columnwidth]{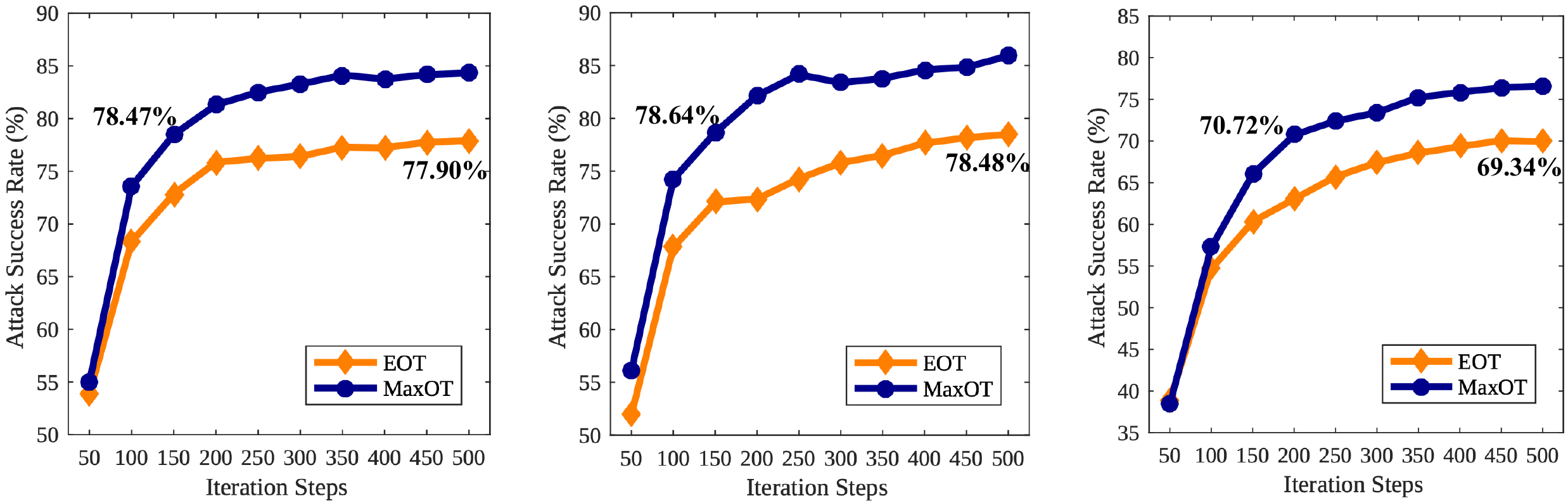}\vspace{-1ex}
\caption{The attack success rate w.r.t number of iterations curves of EOT and MaxOT against PointNet, PointNet++, and DGCNN. Our proposed MaxOT algorithm can lead to a higher attack success rate with a small number of iterations than that of EOT with a large number of iterations.}
\label{fig:fig4}
\end{figure}

\begin{table}[t]
\caption{Quantitative results of attacking different models using EOT and MaxOT. Our proposed MaxOT algorithm outperforms the EOT algorithm in terms of attack success rate.}
  \centering
 \footnotesize
  \begin{tabular}{c||c|c|c|c|c|c|c|c|c}
  \hline
\multirow{2}{*}{Model} & \multicolumn{3}{c|}{PointNet} & \multicolumn{3}{c|}{PointNet++} & \multicolumn{3}{c}{DGCNN}\\
\cline{2-10}
 & ASR & $\mathcal{D}_{c}$ & $\mathcal{D}_{g}$ & ASR & $\mathcal{D}_{c}$ & $\mathcal{D}_{g}$ & ASR & $\mathcal{D}_{c}$ & $\mathcal{D}_{g}$ \\
\hline\hline
EOT & 76.20\% & 0.0074 & 0.0009 & 74.28\% & 0.0094 & 0.0007 & 65.72\% & 0.0068 & 0.0041 \\
MaxOT & \bf82.50\% & 0.0074 & 0.0009  & \bf84.14\% & 0.0094 & 0.0006 & \bf72.40\% & 0.0067 & 0.0039 \\
\hline
   \end{tabular}
  \label{table:table2}\vspace{-1ex}
\end{table}

\begin{table}[t]
\begin{minipage}{0.68\linewidth}
\setlength{\tabcolsep}{5pt}
\caption{The attack success rate of various methods against PointNet under different defense methods. Our proposed $\epsilon$-ISO attack performs better under stronger defenses, and the attack success rate under DUP-Net and IF-Defence is higher than all other attacks.}
  \centering
 \scriptsize
 
  \begin{tabular}{c||p{9.8ex}<{\centering}|p{7.8ex}<{\centering}|p{7.8ex}<{\centering}|p{8.8ex}<{\centering}|p{10.8ex}<{\centering}}
  \hline
 & no defence     & SRS            & SOR            & DUP-Net        & IF-Defence\\
\hline\hline
KNN   & 32.15\%          & 24.34\%         & 13.36\%          & 11.25\%          & 7.46\%           \\
$GeoA^3$ & 40.87\%          & 3.34\%           & 33.22\%          & 11.97\%          & 1.04\%           \\
Mesh Attack  & 93.39\%          & \textbf{87.38}\% & \textbf{89.93}\% & 78.28\%          & 49.06\%          \\
\hline
$\epsilon$-ISO  & \textbf{99.81}\% & 79.85\%          & 85.42\%          & \textbf{78.55}\% & \textbf{60.51}\% \\
\hline

   \end{tabular}
  \label{table:table3}
  \end{minipage}
  \hspace{1ex}
  \begin{minipage}{0.25\linewidth}
  \setlength{\tabcolsep}{5pt}
     \caption{The success rate of attacking the PointNet model in the physical world using the EOT algorithm and our proposed MaxOT algorithm.}
  \centering
  \scriptsize
  
  \begin{tabular}{c||p{8.8ex}<{\centering}}
  \hline
    & PointNet \\
  \hline
EOT & 72.36\%  \\
MaxOT & \textbf{80.12}\%  \\
  \hline
  
   \end{tabular}
  \label{table:table4}
  \end{minipage}
\end{table}

\begin{figure}[t]
\centering
\includegraphics[width=0.99\columnwidth]{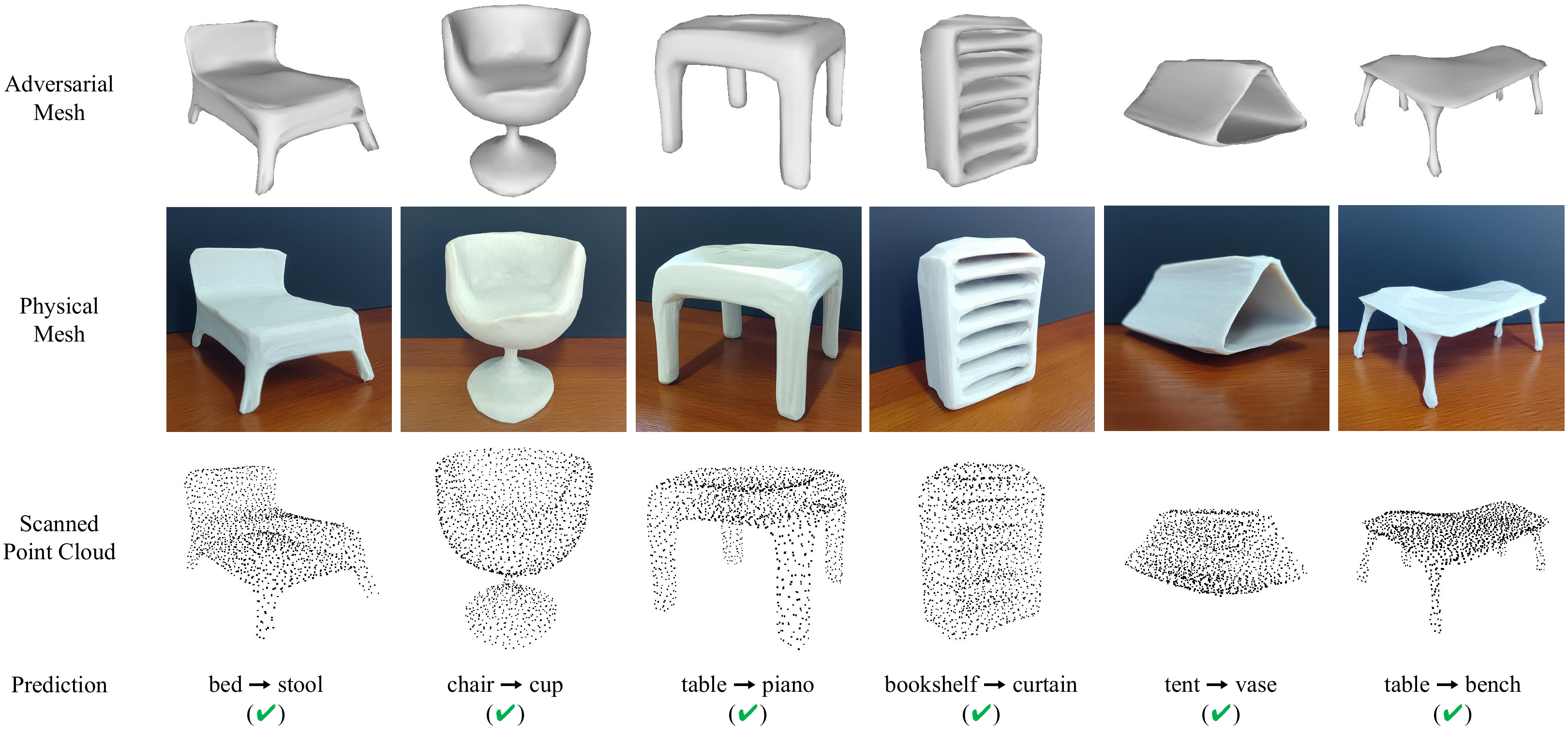}
\caption{Visualization of the $\epsilon$-ISO attack in the physical world. The adversarial mesh is randomly selected from Table~\ref{table:table2}. After 3D-printing and scanning, the point cloud obtained from the scanning is used to attack the PointNet. A green check mark indicates a successful attack. The adversarial samples we generated are natural and stealthy, and because of their smoothness and naturalness, can be scanned correctly by the 3D-scanner, maintaining the adversarial effect in the physical world.}
\label{fig:fig6}
\end{figure}

\begin{figure}[t]
\centering
\includegraphics[width=0.99\columnwidth]{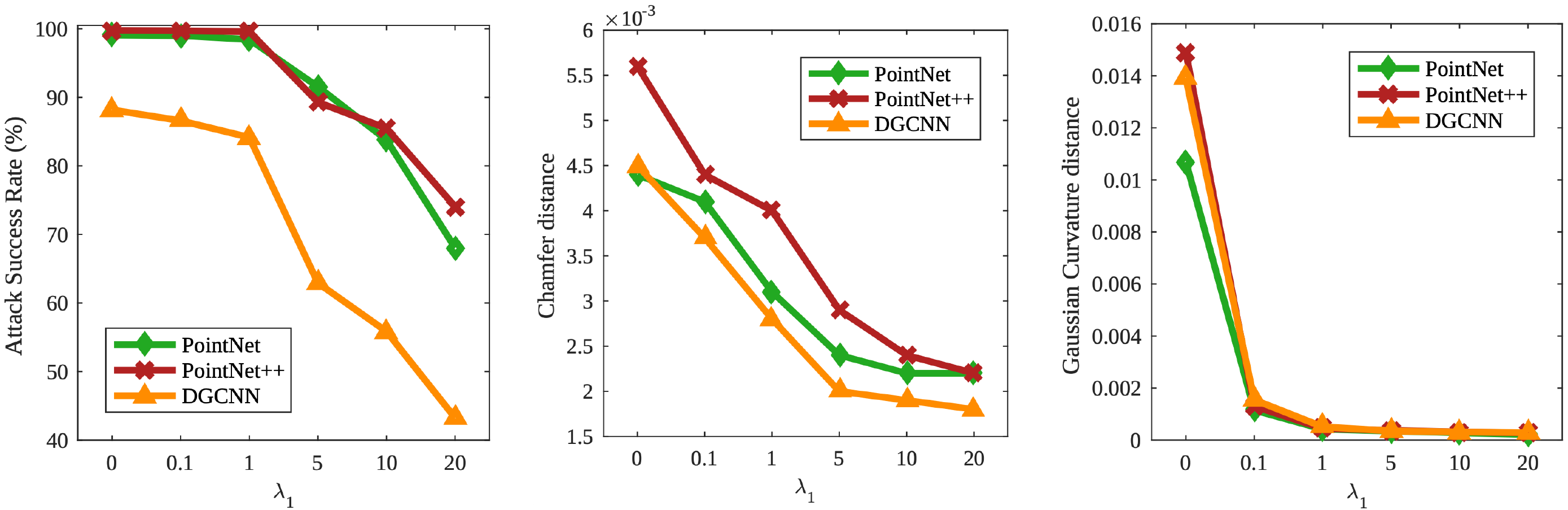}\vspace{-1ex}
\caption{The effects of the penalty parameter $\lambda_{1}$ of the Gaussian curvature consistency regularization term. When $\lambda_{1}$ is large, the attack success rate decreases rapidly. When $\lambda_{1}$ is small, the naturalness of the adversarial objects is worse. The default $\lambda_{1}=1$ gives the best result for balancing the attack success rate and stealthiness.}
\label{fig:fig5}\vspace{-1ex}
\end{figure}  
\subsection{Pseudo physical attack}

In this section, we compare our proposed $\epsilon$-ISO attack with existing methods~\cite{tsai2020robust,wen2020geometry,zhang20213d} that generate adversarial objects under the setting of white-box targeted attack, including the KNN attack~\cite{tsai2020robust}, $GeoA^3$ attack~\cite{wen2020geometry}, and 
Mesh Attack~\cite{zhang20213d}. 
In Table~\ref{table:table1}, we compare different methods under the measures of attack success rate (ASR), Chamfer distance $\mathcal{D}_{c}$~\cite{fan2017point} and Gaussian curvature distance $\mathcal{D}_{g}$. Our proposed $\epsilon$-ISO attack outperforms all existing methods in terms of attack success rate and is almost optimal in terms of Chamfer distance $\mathcal{D}_{c}$ and Gaussian curvature distance $\mathcal{D}_{g}$, the evaluation metrics for measuring imperceptibility. These comparisons confirm the effectiveness of our proposed regularization term based on Gaussian curvature, to simultaneously achieve the strongest adversarial attack and the most natural perturbations.

Fig.~\ref{fig:fig3} shows the adversarial objects generated by different methods. The KNN attack and $GeoA^3$ attack produce dense irregular deformations on the mesh surface, losing smoothness and regularity. The Mesh Attack creates severe deformations and spurs, which are easily perceived by humans visually. In addition, the severe irregularities and distortions generated by the Mesh Attack can lead to self-intersecting phenomena in the mesh of 3D objects, i.e., the black surfaces in Fig.~\ref{fig:fig3}. These self-intersecting surfaces can prohibit 3D-printing the adversarial objects in the physical world.
By considering geometric features of the object instead of using metrics in the Euclidean space, our $\epsilon$-ISO attack produces adversarial samples without dense deformations, bursts, distortions, and self-intersection phenomena, while are stealthy and natural.

\subsection{Robustness under physical transformation}

To further verify the superior robustness of our proposed MaxOT algorithm under transformations in the physical world, we compare MaxOT to EOT. In Table~\ref{table:table2}, we compare the different methods under rotations to calculate the attack success rate,
Chamfer distance $\mathcal{D}_{c}$ and Gaussian curvature distance $\mathcal{D}_{g}$. Our proposed MaxOT algorithm outperforms the EOT algorithm in terms of attack success rate. This comparison confirms our insight that the resistance to the most harmful physical transformations is better than random ones.

In Fig.~\ref{fig:fig4}, we give the success rate w.r.t. iterations curves for EOT and MaxOT of different victim models. Our proposed MaxOT algorithm is always ahead of the EOT algorithm at different number of iterations. Moreover, we note that our proposed MaxOT algorithm can lead to higher attack success rate with a small number of iterations than that of EOT with a large number of iterations. Therefore, our proposed MaxOT algorithm is more efficient.

\subsection{Performance under defense}

We further verify the effectiveness of our $\epsilon$-ISO attack under the existing defense methods. To evaluate the adversarial robustness of various attacks, we use the following defense methods: Simple Random Sampling (SRS)~\cite{yang2019adversarial}, Statistical Outlier Removal (SOR)~\cite{zhou2019dup}, DUP-Net defense~\cite{zhou2019dup}, and IF-Defense~\cite{wu2020if}. We give the attack success rate of various attacks under each defense method in Table~\ref{table:table3}. Under simple defenses such as SRS and SOR, the attack success rate of Mesh Attack is higher than our $\epsilon$-ISO attack. 

However, under more advanced and effective defenses such as IF-Defence, the attack success rate of our $\epsilon$-ISO attack is higher than all other attacks. This is because KNN attack and $GeoA^3$ attack generate adversarial objects through dense local deformation, generating outliers that are easily detected by the defense methods. The Mesh Attack generates adversarial objects through large anomalous deformation
, resulting in fewer local outliers that have advantages under simple statistical defenses such as SRS and SOR, but 
cannot effectively attack the IF-Defence with better defense performance. Our $\epsilon$-ISO attack produces adversarial samples without local outliers or anomalous deformations, with better attack performance, especially under IF-Defence.

\subsection{Physical attack}

We randomly select 100 instances 
in Table~\ref{table:table2}, 50 of which are generated by the MaxOT algorithm and the other 50 are the corresponding instances generated by the EOT algorithm. The selected meshes are printed by the Stratsys J850 Prime 3D printer and scanned by the EinScan-SE 3D scanner. The attack success rate in the physical world is shown in Table~\ref{table:table4}, where our algorithm almost completely maintains the effect of the pseudo physical world attack. None of the previous algorithms could maintain the pseudo physical world attack well in the physical world because the dense deformation and self-intersection phenomena would interfere with the scanning and prevent the scanner from correctly scanning the surface of the adversarial samples generated by their algorithms. In contrast, the surface of the adversarial sample produced by our $\epsilon$-ISO attack is smooth and natural and can be scanned correctly by the scanner, preserving the adversarial effect. In addition, the improvement on the attack success rate of our proposed MaxOT algorithm against the EOT algorithm is also reflected in the physical world. Our proposed MaxOT algorithm is necessary to improve the robustness of the adversarial sample under the transformation of the physical world. The adversarial meshes, 3D printed physical meshes, and scanned point clouds are shown in Fig.~\ref{fig:fig6}.

\subsection{Ablation studies}

To investigate the effect of the penalty parameter $\lambda_{1}$ of the Gaussian curvature consistency regularization term on the attack success rate and imperceptibility metrics, we adjust the value of $\lambda_{1}$ to perform quantitative and qualitative experiments. The plots in Fig.~\ref{fig:fig5} show that when $\lambda_{1}$ is tuned high, the attack success rate decreases rapidly. When $\lambda_{1}$ is adjusted lower, the values of the evaluation metrics increase and deteriorate rapidly. The qualitative visualization results in Appendix C demonstrate the irregularity of the 3D objects as $\lambda_{1}$ is turned down.
Our default value of $\lambda_{1} = 1$ gives the best result for balancing the attack success rate and stealthiness.
To investigate the effectiveness of Bayesian optimization to find the initial transformations, we test the attack success rate for randomly selecting the initial transformation in the MaxOT algorithm. As shown in Appendix C, the improvement of using Bayesian optimization over random initialization in the MaxOT algorithm is significant. This confirms the importance of Bayesian optimization methods in our approach.
More ablation studies can be found in Appendix C.

\vspace{-1ex}
\section{Conclusion}\label{sec:conclude}
\vspace{-1ex}

In this paper, we propose a novel $\epsilon$-isometric ($\epsilon$-ISO) attack method to generate natural and robust 3D adversarial examples in the physical world. We improve the naturalness of 3D adversarial examples by constraining it to be $\epsilon$-isometric to the original one. Through a theoretical analysis, we adopt the Gaussian curvature as the surrogate metric. 
We further propose a maxima over transformation (MaxOT) algorithm armed with Bayesian optimization that actively searches for the most harmful transformations rather than random ones to make the generated adversarial example more robust in the physical world. Extensive Experiments on typical point cloud recognition models validate the effectiveness of our approach in terms of attack success rate and naturalness compared to the state-of-the-art methods. A potential negative societal impact of our work in that the malicious adversaries may adopt our method to generate 3D adversarial objects in the physical world, which can cause severe security/safety consequences for real-world applications. Thus it is imperative to develop more robust 3D recognition models, which we leave to future work.

\section*{Acknowledgement}
This work was partially done when Yibo Miao was intern at RealAI. Yibo Miao and Xiao-Shan Gao were supported by NKRDP grant No.2018YFA0704705 and NSFC grant No.12288201. Yinpeng Dong and Jun Zhu were supported by the National Key Research and Development Program of China (2020AAA0106000, 2020AAA0104304, 2020AAA0106302), NSFC Projects (Nos. 62061136001, 62076145, 62076147, U19B2034, U1811461, U19A2081, 61972224), Beijing NSF Project (No. JQ19016), BNRist (BNR2022RC01006), Tsinghua Institute for Guo Qiang, and the High Performance Computing Center, Tsinghua University. Y. Dong was also supported by the China National Postdoctoral Program for Innovative Talents and Shuimu Tsinghua Scholar Program. J. Zhu was also supported by the XPlorer Prize.

{\small
\bibliographystyle{plainnat}
\bibliography{reference}}

\clearpage
\appendix
\numberwithin{equation}{section}
\numberwithin{figure}{section}
\numberwithin{table}{section}

\setcounter{theorem}{0}
\section{Proof}

\begin{theorem}\label{the:fake1}
Let $S$ and $\tilde { S }$ denote two surfaces of $\mathbb{R}^{3}$;
$\varphi: {S} \rightarrow \tilde { S }$ denote a diffeomorphism that takes a point $v$ in $S$ to  point $v ^ { \prime }=\varphi(v)$ in $\tilde { S }$; and $K ( \cdot )$ be the Gaussian curvature of the points. 
If $|K(v)-K(v^{\prime })|<\epsilon$ for any point $v$, then the surfaces $S$ and $\tilde { S }$ are $\epsilon$-isometric.
\end{theorem}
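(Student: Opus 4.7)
The plan is to show that the diffeomorphism $\varphi$ distorts arc length by at most a factor of $(1\pm n\epsilon)$. The natural path is to first show that the pullback metric $\varphi^{*} g_{\tilde S}$ is uniformly close to the original metric $g_S$ on $S$, and then integrate this closeness along an arbitrary curve. The right tool is the local characterization of the first fundamental form by Gaussian curvature via the Jacobi equation in geodesic polar coordinates, where the link between the metric coefficient and $K$ is an ODE rather than a constitutive relation (so the hypothesis on $K$ feeds in directly as a coefficient perturbation).

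Concretely, around a base point $p\in S$ I would set up geodesic polar coordinates $(r,\theta)$ so that the first fundamental form reduces to $dr^{2} + G(r,\theta)\,d\theta^{2}$, and do the same on $\tilde S$ around $\varphi(p)$, obtaining $dr^{2} + \tilde G(r,\theta)\,d\theta^{2}$; identifying the two angular coordinates, both metrics live on a common parameter domain. In these coordinates $\sqrt G$ and $\sqrt{\tilde G}$ satisfy the Jacobi ODEs $(\sqrt G)_{rr} + K\sqrt G = 0$ and $(\sqrt{\tilde G})_{rr} + \tilde K\sqrt{\tilde G} = 0$ with common initial data $\sqrt G(0,\theta)=0$, $(\sqrt G)_r(0,\theta)=1$. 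The hypothesis $|K-\tilde K|<\epsilon$ is thus an $\epsilon$-perturbation of the ODE coefficient, so Grönwall's inequality applied to $h:=\sqrt G-\sqrt{\tilde G}$, which satisfies $h_{rr} + K h = (\tilde K-K)\sqrt{\tilde G}$ with zero initial data, yields $|\sqrt G-\sqrt{\tilde G}|\le C\epsilon$ on any relatively compact chart, with $C$ depending only on the chart.

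Once this pointwise bound on the metric coefficients is in hand, the arc-length comparison is direct. For a local curve $C\subset S$ written as $(r(t),\theta(t))$, $t\in[a,b]$, I would use the elementary inequality $\bigl|\sqrt{\dot r^{2}+G\dot\theta^{2}}-\sqrt{\dot r^{2}+\tilde G\dot\theta^{2}}\bigr|\le\bigl|\sqrt G-\sqrt{\tilde G}\bigr|\cdot|\dot\theta|$ under the integrand, obtaining $|s(C)-s(\tilde C)|\le C\epsilon\int_{a}^{b}|\dot\theta|\,dt$. On a relatively compact chart, $\int_{a}^{b}|\dot\theta|\,dt$ is bounded by a constant multiple $n$ of the intrinsic length $s(C)$, yielding $|s(C)-s(\tilde C)|\le n\epsilon\,s(C)$, i.e.\ $(1-n\epsilon)s(C)<s(\tilde C)<(1+n\epsilon)s(C)$, which is exactly the $\epsilon$-isometric condition from Definition~\ref{def:1}.

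The main obstacle I expect is aligning the two sets of geodesic polar coordinates through $\varphi$: the hypothesis only controls Gaussian curvatures pointwise and gives no direct information on the first-order behavior of $\varphi$. A clean workaround is to invoke a Minding-type local construction producing an intrinsic isometry between the two geodesic balls whenever curvatures match, and then quantify how far $\varphi$ must differ from it when curvatures only $\epsilon$-match; this is where the constant $n$ picks up its dependence on $\varphi$ itself. A secondary but routine difficulty is globalizing the local estimate into a single constant $n$ valid for all curves on $S$, which should follow by a compactness/partition-of-unity argument patching the local Grönwall bounds.
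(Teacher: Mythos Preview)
Your skeleton matches the paper's: both arguments pass to geodesic polar coordinates $(\rho,\theta)$ so that the metric reads $d\rho^{2}+G\,d\theta^{2}$, invoke the Jacobi equation $(\sqrt G)_{\rho\rho}+K\sqrt G=0$ with the standard initial data from Lemma~\ref{lemma:2}, and then translate a bound on $|\sqrt G-\sqrt{\tilde G}|$ into an arc-length comparison. Where you diverge is in how you extract that bound from the ODE. The paper does not run a Gr\"onwall argument; instead it solves the Jacobi equation \emph{explicitly} under a constant-curvature ansatz (the three cases $K>0$, $K=0$, $K<0$ giving $\sqrt G=\sin(\sqrt K\rho)/\sqrt K$, $\rho$, $\sinh(\sqrt{-K}\rho)/\sqrt{-K}$), then Taylor-expands these closed forms in $\rho$ and reads off $|G-\tilde G|<\tfrac{\rho^{4}}{3}\epsilon$ directly from $|K-\tilde K|<\epsilon$. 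Your Gr\"onwall route is the more robust of the two, since it does not require pretending $K$ is constant and carries honest error control; the paper's explicit-solution route is shorter and yields a clean $\rho^{4}/3$ constant but tacitly treats $K$ as locally constant and drops higher-order Taylor terms without remainder bounds.

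One further point worth flagging: the obstacle you single out---aligning the two geodesic polar charts through $\varphi$---is real, and the paper simply does not address it. The paper's derivation implicitly assumes that $\varphi$ identifies the $(\rho,\theta)$ coordinates on $S$ with those on $\tilde S$, which is not part of the hypothesis. So your instinct to appeal to a Minding-type construction and absorb the discrepancy into the constant $n$ is a genuine addition rather than an omission on your part.
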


\begin{proof}
We will prove Theorem~\ref{the:fake1} in geodesic polar coordinates~\cite{do2016differential}, which are defined as follows.

\begin{definition}\label{def:a1}
Choose in the tangent plane $T_{v}(S)$, a system of polar coordinates $( \rho , \theta )$ where $\rho$ is the polar radius and $\theta$,$0<\theta<2\pi$, is the polar angle, the pole of which is the origin $0$ of $T_{v}(S)$. Set $\exp _{v}(l) = L$, where $\exp _{v}$ is the exponential map at $v$ and $l$ is the closed half-line which corresponds to $\theta = 0$. Since $\exp _{p}: T_{v}(S)-l \rightarrow S-L$ is a diffeomorphism, we may parametrize the points of $S-L$ by the coordinates $( \rho , \theta )$, which are called geodesic polar coordinates.
\end{definition}

\newtheorem{lemma}{Lemma}
\begin{lemma}[Theorem 4.27 in~\cite{kuhnel2015differential}]\label{lemma:1}
Let $( \rho , \theta )$ be geodesic polar coordinates. Then the coefficients ${E}={E}(\rho, \theta)$ and ${F}={F}(\rho, \theta)$ of the first fundamental form satisfy the conditions ${E}=1,{F}=0$.
\end{lemma}

\newtheorem{remark}{Remark}
\begin{remark}\label{remark:1}
The geometric meaning of the fact that $F=0$ is that in a normal neighborhood the family of geodesic circles is orthogonal to the family of radial geodesics. This fact is known as the Gauss lemma.
\end{remark}

We can see that the first fundamental form $d s ^ { 2 } = d \rho ^ { 2 } + G d \theta ^ { 2 }$ in a polar system by Lemma~\ref{lemma:1}. By applying the Gauss formula~\cite{do2016differential}, 
\begin{equation}\label{eq:a1}
-\frac{1}{\sqrt{E G}}\left\{\left(\frac{(\sqrt{E})_{v}}{\sqrt{G}}\right)_{v}+\left(\frac{(\sqrt{G})_{u}}{\sqrt{E}}\right)_{u}\right\}=\frac{L N-M^{2}}{E G} = K.
\end{equation}

\begin{remark}\label{remark:2}
The first fundamental form is the expression of how the surface $S$ inherits the natural inner product of $\mathbb{R}^{3}$. Geometrically, the first fundamental form allows us to make measurements on the surface (lengths of curves, angles of tangent vectors, areas of regions) without referring back to the ambient space $\mathbb{R}^{3}$ where the surface lies. The second fundamental form describes the shape of the surface in the ambient space $\mathbb{R}^{3}$. The Gaussian curvature can be defined by the coefficients of the first fundamental form and the coefficients of the second fundamental form. The Gauss formula and the Mainardi-Codazzi equations reveal the relations between the first and second fundamental forms of a surface. Gauss formula expresses the Gaussian curvature as a function of the coefficients of the first fundamental form and its derivatives, which is also known as Gauss’ Theorema Egregium.
\end{remark}

the Gaussian curvature $K$ can be written
\begin{equation}\label{eq:a2}
K=-\frac{(\sqrt{G})_{\rho \rho}}{\sqrt{G}}.
\end{equation}
We state the following lemma to calculate $G$.

\begin{lemma}[Theorem 4.6.3 in~\cite{do2016differential}]\label{lemma:2}
Let $( \rho , \theta )$ be geodesic polar coordinates. Then the coefficients ${G}={G}(\rho, \theta)$ of the first fundamental form satisfy the conditions $\lim _{\rho \rightarrow 0} {G}=0,\lim _{\rho \rightarrow 0}(\sqrt{{G}})_{\rho}=1$.
\end{lemma}

If $K>0$, $(\sqrt{G})_{\rho \rho}+K \sqrt{G}=0$.
The general solution is given by
\begin{equation}\label{eq:a6}
\sqrt{G}=f(\theta) \cos (\sqrt{K} \rho)+g(\theta) \sin (\sqrt{K} \rho).
\end{equation}
From the conclusion in Lemma~\ref{lemma:2}, we know $f(\theta)=0$ and $g(\theta)=\frac{1}{\sqrt{K}}$.
We finally have 
\begin{equation}\label{eq:a7}
d s ^ { 2 } = d \rho ^ { 2 } + \frac{1}{K} \sin ^{2}(\sqrt{K} \rho) \cdot d \theta ^ { 2 }.
\end{equation}

If $K=0$, $(\sqrt{G})_{\rho \rho}=0$. Thus,
\begin{equation}\label{eq:a3}
(\sqrt{G})_{\rho}=f(\theta).
\end{equation}
From the conclusion in Lemma~\ref{lemma:2}, we know $f(\theta)=1$ and 
\begin{equation}\label{eq:a4}
\sqrt{G}=\rho+g(\theta).
\end{equation}
By reusing Lemma~\ref{lemma:2}, we know $g(\theta)=0$ and $\sqrt{G}=\rho$. We finally have 
\begin{equation}\label{eq:a5}
d s ^ { 2 } = d \rho ^ { 2 } + \rho ^ { 2 } \cdot d \theta ^ { 2 }.
\end{equation}

If $K<0$, $(\sqrt{G})_{\rho \rho}+K \sqrt{G}=0$.
The general solution is given by
\begin{equation}\label{eq:a8}
\sqrt{G}=f(\theta) \cosh (\sqrt{-K} \rho)+g(\theta) \sinh (\sqrt{-K} \rho).
\end{equation}
From the conclusion in Lemma~\ref{lemma:2}, we know $f(\theta)=0$ and $g(\theta)=\frac{1}{\sqrt{-K}}$.
We finally have 
\begin{equation}\label{eq:a9}
d s ^ { 2 } = d \rho ^ { 2 } + \frac{1}{-K} \sinh ^{2}(\sqrt{-K} \rho) \cdot d \theta ^ { 2 }.
\end{equation}
Thus we can derive
\begin{equation}\label{eq:a10}
\begin{array}{l}
\left|\frac{1}{K(v)} \sin ^{2}(\sqrt{K(v)} \rho)-\frac{1}{K(v^{\prime })} \sin ^{2}(\sqrt{K(v^{\prime })} \rho)\right| \\
=\left|\frac{1}{K(v) K(v^{\prime })}\left[K(v^{\prime }) \sin ^{2}(\sqrt{K(v)} \rho)-K(v) \sin ^{2}(\sqrt{K(v^{\prime })} \rho)\right]\right| \\
=\left|\frac{1}{K(v) K(v^{\prime })}\left[K(v^{\prime })\left(K(v) \rho^{2}-\frac{K(v)^{2} \rho^{4}}{3}\right)-K(v)\left(K(v^{\prime }) \rho^{2}-\frac{K(v^{\prime })^{2} \rho 4}{3}\right)\right]\right| \\
=\left|\frac{1}{K(v) K(v^{\prime })} \cdot K(v) K(v^{\prime }) \cdot \frac{\rho^{4}}{3}(K(v^{\prime })-K(v))\right|<\frac{\rho^{4}}{3} \epsilon.
\end{array}
\end{equation}

\begin{equation}\label{eq:a11}
\begin{aligned}
\left|\frac{1}{K(v)} \sin ^{2}(\sqrt{K(v)} \rho)-\rho^{2}\right|=\left|\frac{1}{K(v)}\left(K(v) \rho^{2}-\frac{K(v)^{2} \rho^{4}}{3}\right)-\rho^{2}\right|=\left|-\frac{p 4}{3}(K(v)-0)\right|<\frac{p^{4}}{3} \epsilon.
\end{aligned}
\end{equation}

\begin{equation}\label{eq:a12}
\begin{array}{l}
\left|-\frac{1}{K(v)} \sinh ^{2}(\sqrt{-K(v)} \rho)+\frac{1}{K(v^{\prime })} \sinh ^{2}(\sqrt{-K(v^{\prime })} \rho)\right| \\
=\left|\frac{1}{K(v) K(v^{\prime })}\left[-K(v^{\prime }) \sinh ^{2}(\sqrt{-K(v)} \rho)+K(v) \sinh ^{2}(\sqrt{-K(v^{\prime })} \rho)\right]\right| \\
=\left|\frac{1}{K(v) K(v^{\prime })}\left[-K(v^{\prime })\left(-K(v) \rho^{2}+\frac{K(v)^{2} \rho^{4}}{3}\right)+K(v)\left(-K(v^{\prime }) \rho^{2}+\frac{K(v^{\prime })^{2} \rho^{4}}{3}\right)\right]\right| \\
=\left|\frac{1}{K(v) K(v^{\prime })} \cdot K(v) K(v^{\prime }) \cdot \frac{\rho^{4}}{3}(K(v^{\prime })-K(v))\right|<\frac{\rho^{4}}{3} \epsilon.
\end{array}
\end{equation}

\begin{equation}\label{eq:a13}
\begin{aligned}
\left|-\frac{1}{K(v)} \sinh ^{2}(\sqrt{-K(v)} \rho)-\rho^{2}\right|=\left|-\frac{1}{K(v)}\left(-K(v) \rho^{2}+\frac{K(v)^{2} \rho^{4}}{3}\right)-\rho^{2}\right|<\frac{p^{4}}{3} \epsilon.
\end{aligned}
\end{equation}

\begin{equation}\label{eq:a14}
\begin{array}{l}
\left|\frac{1}{K(v)} \sin ^{2}(\sqrt{K(v)} \rho)+\frac{1}{K(v^{\prime })} \sinh ^{2}(\sqrt{-K(v^{\prime })} \rho)\right| \\
=\mid \frac{1}{K(v) K(v^{\prime })}\left[K(v^{\prime }) \sin ^{2}(\sqrt{K(v)} \rho)+K(v) \sinh ^{2}(\sqrt{-K(v^{\prime })} \rho) \mid\right. \\
=\left|\frac{1}{K(v) K(v^{\prime })}\left[K(v^{\prime })\left(K(v) \rho^{2}-\frac{K(v)^{2} \rho^{4}}{3}\right)+K(v)\left(-K(v^{\prime }) \rho^{2}+\frac{K(v^{\prime })^{2} \rho^{4}}{3}\right)\right]\right| \\
=\left|\frac{1}{K(v) K(v^{\prime })} \cdot K(v) K(v^{\prime }) \cdot \frac{\rho^{4}}{3}(K(v^{\prime })-K(v)) \right|<\frac{\rho^{4}}{3} \epsilon.
\end{array}
\end{equation}

Thus from the condition $|K(v)-K(v^{\prime })|<\epsilon$ we have
\begin{equation}\label{eq:a15}
|d s^{2}-d \tilde{s}^{2}|<\frac{\rho^{4}}{3} \cdot d \theta^{2} \cdot \epsilon,
\end{equation}
where $ds$ is the element of arc length on a curved surface.

Let $s(C)=n\cdot ds$ and $s(\tilde { C })=n\cdot d \tilde{s}$, then we have
\begin{equation}\label{eq:a16}
|s(\tilde{C})-s(C)| =n|d s-d \tilde{s}| <s(C) \cdot \frac{d \theta^{2}}{d s^{2}+d s d \hat{s}} \cdot \frac{\rho^{4}}{3} \epsilon
\end{equation}
This equation consequently results in 
\begin{equation}\label{eq:a17}
\left(1-\frac{d \theta^{2}}{d s^{2}+d s d \hat{s}} \cdot \frac{\rho^{4}}{3}\cdot\epsilon\right) s(C)<s(\tilde { C })<\left(1+\frac{d \theta^{2}}{d s^{2}+d s d \hat{s}} \cdot \frac{\rho^{4}}{3}\cdot\epsilon\right) s(C)
\end{equation}
Thus the surfaces $S$ and $\tilde { S }$ are $\epsilon$-isometric.
\end{proof}

\section{Algorithm}

We provide the algorithm to solve problem (6) using Bayesian Optimization in Alg.~\ref{algo-1}. 
We choose GP as a surrogate model, which provides a Bayesian posterior distribution to describe the objective function $f\left(t ; D_{n}\right) \sim G P\left(\mu\left(D_{n}\right), \Sigma\left(D_{n}, D_{n}\right)\right)$, where $f\left(t ; D_{n}\right)$ is the modeling of the unknown function $\mathcal{L}_{ce}\left(t(S(\mathcal{M}_{adv})), y^*\right)$ and $D_{n}=\left\{t_{i}, \mathcal{L}_{ce}\left(t_{i}(S(\mathcal{M}_{adv})), y^*\right)\right\}_{i=1}^{n}$ is the $n$ samples observed so far. $\mu$ and $\Sigma$ are the mean and covariance functions, respectively.
We use the expected improvement (EI)~\cite{jones1998efficient,mockus1978application} acquisition function $\alpha_{E I}\left(t ; D_{n}\right)=E_{f\left(t ; D_{n}\right) \sim G P\left(\mu\left(D_{n}\right), \Sigma\left(D_{n}, D_{n}\right)\right)}\left[\max \left(f(t)-f_{D n}^{+}, 0\right)\right]$, which measures the expected improvement of each point with respect to the current best value, where $f_{D n}^{+}=\max _{i \leq n} f\left(t_{i}\right)$ is the best value observed so far. Then, the target function $f$ is sampled by $\arg \max _{{t}} \alpha_{E I}\left(t ; D_{n}\right)$ to better explore the space of all transformations by selecting the next query point ${t}_{n+1}$ in the region where the prediction is high and the model is very uncertain.

\begin{algorithm}[h]
   \caption{Solve problem (6) using Bayesian Optimization}
   \label{algo-1}
\begin{algorithmic}[1]
   \Require Observation data $D_{K}$, surrogate model $GP\left(\mu\left(\cdot\right), \Sigma\left(\cdot, \cdot\right)\right)$, objective function $\mathcal{L}_{ce}\left(t(S(\mathcal{M}_{adv})), y^*\right)$, the number of transformations in the MaxOT algorithm $S$, the number of gradient descent iterations $K$, and learning rate $\eta_{t}$.
   \State $D_{K} = \left\{t_{i}, \mathcal{L}_{ce}\left(t_{i}(S(\mathcal{M}_{adv})), y^*\right)\right\}_{i=1}^{K}$
   \For{$s = 1$ {\bfseries to} $S$}
   \State Update the surrogate model:
   \begin{equation*}
   f\left(t ; D_{sK}\right) \sim GP\left(\mu\left(D_{sK}\right), \Sigma\left(D_{sK}, D_{sK}\right)\right);
   \end{equation*}
   \State Calculate the acquisition function:
   \begin{equation*}
       \alpha_{E I}\left(t ; D_{sK}\right) \leftarrow E_{f\left(t ; D_{sK}\right) \sim G P\left(\mu\left(D_{sK}\right), \Sigma\left(D_{sK}, D_{sK}\right)\right)}\left[\max \left(f(t)-f_{D_{sK}}^{+}, 0\right)\right];
   \end{equation*}
   \State Select $t_{sK+1} \leftarrow \arg \max _{{t}} \alpha_{E I}\left(t ; D_{sK}\right)$;
   \For{$k = 1$ {\bfseries to} $K$}
   \State Update $t_{sK+1+k}$ with gradient ascent:
   \begin{equation*}
   t_{sK+1+k} \leftarrow t_{sK+k}-\eta_{t}\cdot\nabla_{t_{sK+k}}\mathcal{L}_{ce}\left(t_{sK+k}(S(\mathcal{M}_{adv})), y^*\right);
   \end{equation*}
   \If{$k=K$} 
   \State $t_{s}^{*} \leftarrow t_{sK+1+k}$;
   \EndIf
   \EndFor
   \State $\overline{D}_{s} \leftarrow \left\{t_{sK+1+k}, \mathcal{L}_{ce}\left(t_{sK+1+k}(S(\mathcal{M}_{adv})), y^*\right)\right\}_{k=0}^{K-1}$;
   \State $D_{(s+1)K} \leftarrow D_{sK}\cup\overline{D}_{s}$;
   \EndFor \\
   \Return $t_{s}^{*},s=1,\cdots,S$
\end{algorithmic}
\end{algorithm}

\section{Supplementary experimental results}

We provide more experimental results in this section. 
All of the experiments are conducted on NVIDIA 3080 Ti GPUs. The source code is submitted as part of the supplementary material, and will be released after the review process.

\subsection{The effects of the penalty parameter $\lambda_{1}$}

We study the effects of the penalty parameter $\lambda_{1}$ of the Gaussian curvature consistency regularization term in Sec. 4.6. We adjust the value of $\lambda_{1}$ to perform quantitative and qualitative experiments. The qualitative visualization results in Fig.~\ref{fig:fig7} demonstrate the irregularity of the 3D objects as $\lambda_{1}$ is turned down. Our default value of $\lambda_{1}=1$ guarantees stealthiness.

\begin{figure}[!t]
\centering
\includegraphics[width=0.99\columnwidth]{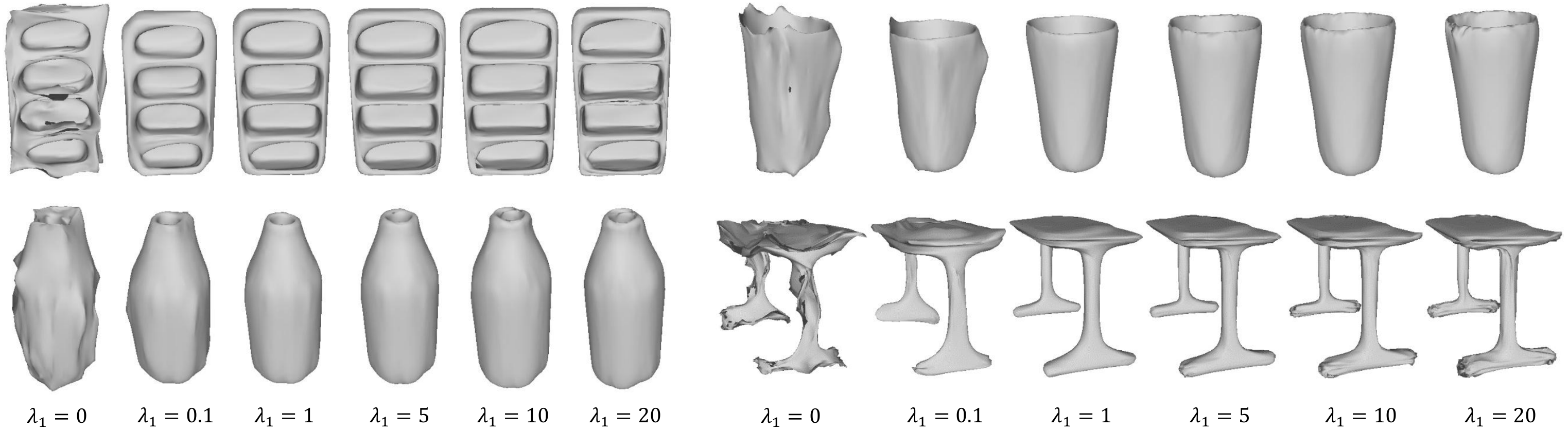}
\caption{Qualitative visualization results of the effects of the penalty parameter $\lambda_{1}$. When $\lambda_{1}$ is turned down, the naturalness of the adversarial objects is worse. Our default value of $\lambda_{1}=1$ guarantees stealthiness.}
\label{fig:fig7}
\end{figure}

\begin{figure}[!t]
\centering
\includegraphics[width=0.99\columnwidth]{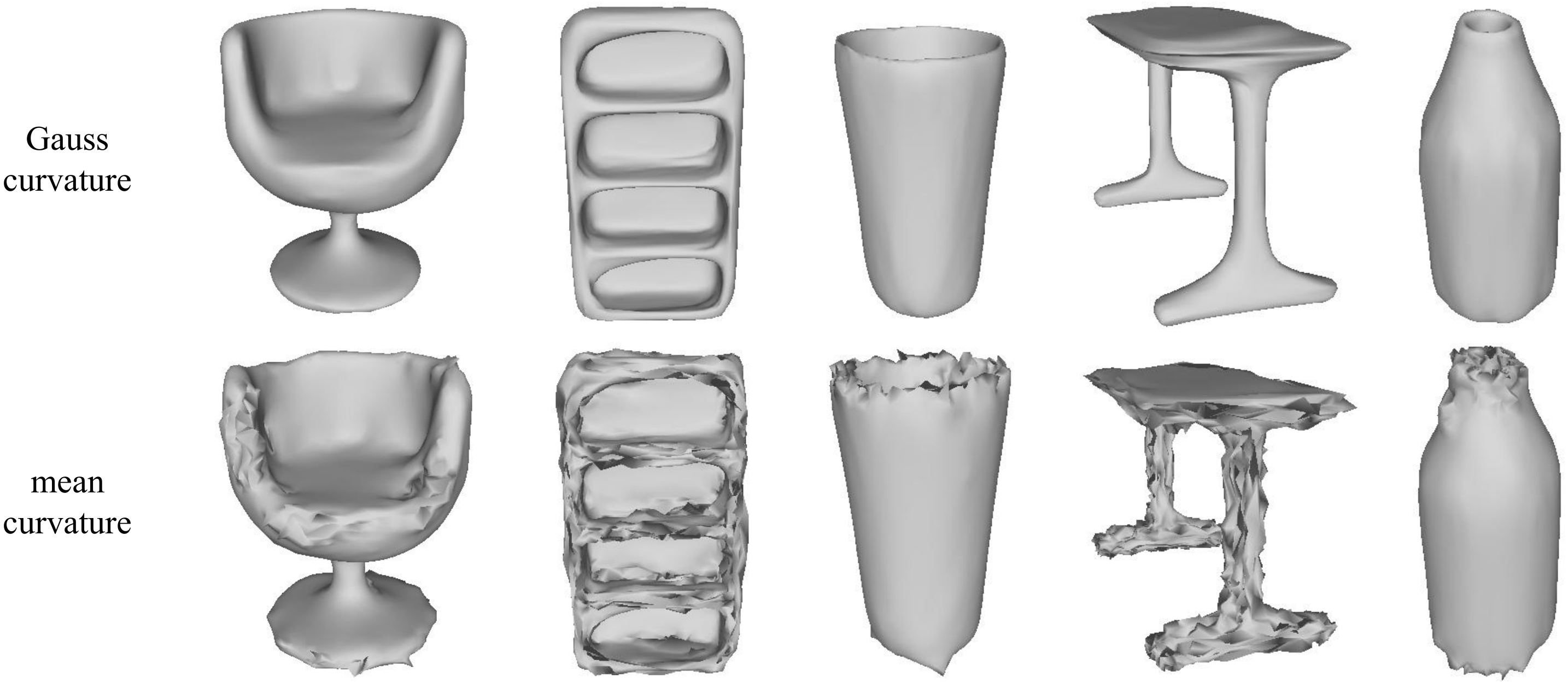}
\caption{Qualitative visualization results of Gaussian curvature and mean curvature. The mean curvature consistency regularization term does not guarantee naturalness and smoothness, while Gaussian curvature does.}
\label{fig:fig8}
\end{figure}

\begin{table}[!t]
\caption{Quantitative results of attacking different models using EOT, MaxOT w/o BO and MaxOT. The improvement of using Bayesian optimization over random initialization in the MaxOT algorithm is significant.}
  \centering
 \footnotesize
  \begin{tabular}{c||p{5.8ex}<{\centering}|p{5.8ex}<{\centering}|p{5.8ex}<{\centering}|p{5.8ex}<{\centering}|p{5.8ex}<{\centering}|p{5.8ex}<{\centering}|p{5.8ex}<{\centering}|p{5.8ex}<{\centering}|p{5.8ex}<{\centering}}
  \hline
\multirow{2}{*}{Model} & \multicolumn{3}{c|}{PointNet} & \multicolumn{3}{c|}{PointNet++} & \multicolumn{3}{c}{DGCNN}\\
\cline{2-10}
 & ASR & $\mathcal{D}_{c}$ & $\mathcal{D}_{g}$ & ASR & $\mathcal{D}_{c}$ & $\mathcal{D}_{g}$ & ASR & $\mathcal{D}_{c}$ & $\mathcal{D}_{g}$ \\
\hline\hline
EOT & 76.20\% & 0.0074 & 0.0009 & 74.28\% & 0.0094 & 0.0007 & 65.72\% & 0.0068 & 0.0041 \\
MaxOT w/o BO & 80.59\% & 0.0075 & 0.0009  & 81.09\% & 0.0094 & 0.0007 & 70.25\% & 0.0067 & 0.0039 \\
MaxOT & 82.50\% & 0.0074 & 0.0009  & 84.14\% & 0.0094 & 0.0006 & 72.40\% & 0.0067 & 0.0039 \\
\hline
   \end{tabular}
  \label{table:table5}
\end{table}

\subsection{The effectiveness of Bayesian optimization}

To investigate the effectiveness of Bayesian optimization to find the initial transformations, we test the attack success rate for randomly selecting the initial transformation in the MaxOT algorithm. As shown in Table~\ref{table:table5}, the improvement of using Bayesian optimization over random initialization in the MaxOT algorithm is significant. This confirms the importance of Bayesian optimization methods in our approach.

\subsection{The effectiveness of Gaussian curvature}

To investigate the effectiveness of Gaussian curvature, we test the mean curvature~\cite{do2016differential} as a substitute for Gaussian curvature. As shown in Fig.~\ref{fig:fig8}, the mean curvature consistency regularization term does not guarantee naturalness and smoothness.
This is because the Gaussian curvature consistency regularization term theoretically provides a sufficient condition (see Theorem~\ref{the:real1}) to ensure that two surfaces are $\epsilon$-isometric, while the mean curvature consistency regularization term is not.

\subsection{Qualitative visualization results of MaxOT and EOT}

We compare MaxOT to EOT in Sec. 4.3.
The qualitative visualization results of MaxOT and EOT in Fig.~\ref{fig:fig9} demonstrate the same degree of naturalness, while our proposed MaxOT algorithm outperforms the EOT algorithm in terms of attack success rate and efficiency.

\begin{figure}[!t]
\centering
\includegraphics[width=0.99\columnwidth]{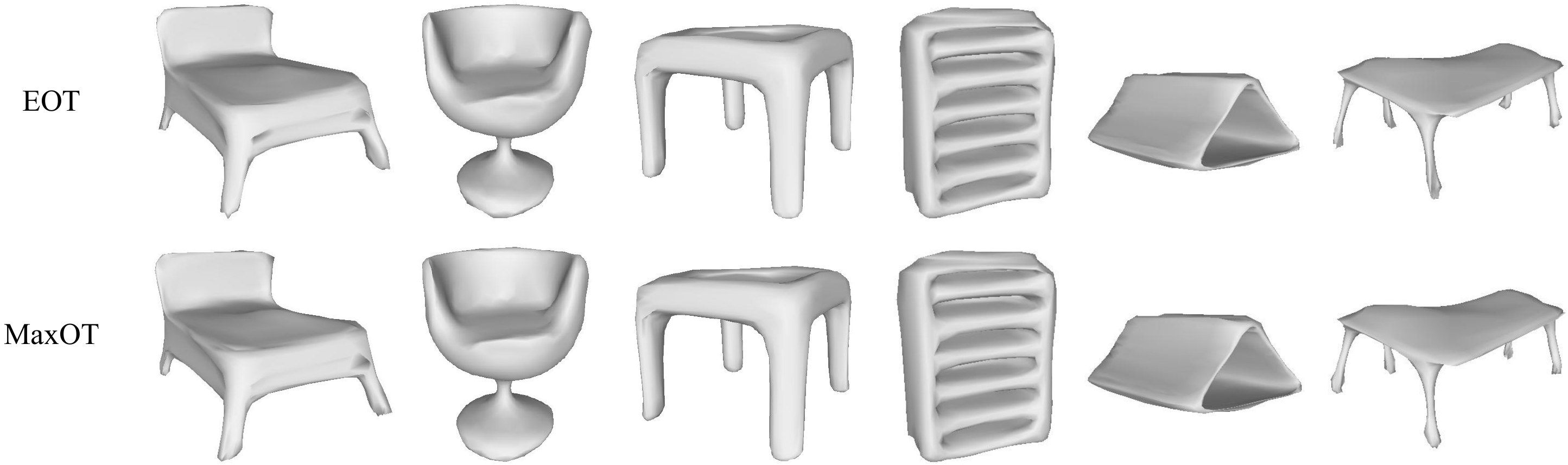}
\caption{Qualitative visualization results of MaxOT and EOT. MaxOT algorithm and EOT algorithm generate adversarial objects with the same degree of naturalness.}
\label{fig:fig9}
\end{figure}

\subsection{The effects of the penalty parameter $\lambda_{2}$ and $\lambda_{3}$}

To investigate the effect of the penalty parameter $\lambda_{2}$ and $\lambda_{3}$, we adjust the value of $\lambda_{2}$ and $\lambda_{3}$ to perform quantitative and qualitative experiments.

As shown in Table~\ref{table:table101} and Table~\ref{table:table102}, when $\lambda_{2}$ or $\lambda_{3}$ is tuned high, the attack success rate decreases rapidly. As shown in Fig.~\ref{fig:fig103}, when $\lambda_{2}$ is adjusted lower, 3D objects show local unevenness and minor self-intersection. When $\lambda_{3}$ is adjusted lower, 3D objects show large areas of self-intersection. Our default value of $\lambda_{2}=0.2$ and $\lambda_{3}=0.8$ give the best result for balancing the attack success rate and stealthiness.

\begin{table}[!t]
\caption{The effect of the penalty parameter $\lambda_{2}$. When $\lambda_{2}$ is large, the attack success rate decreases rapidly. Our default value of $\lambda_{2}=0.2$ give the best result for balancing the attack success rate and stealthiness.}
  \centering
 \footnotesize
  \begin{tabular}{c||p{8.6ex}<{\centering}|p{9.6ex}<{\centering}|p{8.6ex}<{\centering}|p{8.6ex}<{\centering}|p{8.6ex}<{\centering}|p{8.6ex}<{\centering}}
  \hline
\cline{2-7}
Model & $\lambda_{2}=0$ & $\lambda_{2}=0.02$ & $\lambda_{2}=0.2$ & $\lambda_{2}=1$ & $\lambda_{2}=2$ & $\lambda_{2}=4$ \\
\hline\hline
PointNet & 98.72\% & 98.67\% & 98.45\% & 94.22\% & 78.03\% & 69.45\% \\
PointNet++ & 99.69\% & 99.66\% & 99.58\% & 89.25\% & 79.82\% & 72.42\% \\
DGCNN & 85.13\% & 84.82\% & 84.16\% & 69.76\% & 60.48\% & 52.46\% \\
\hline
   \end{tabular}
  \label{table:table101}
\end{table}

\begin{table}[!t]
\caption{The effect of the penalty parameter $\lambda_{3}$. When $\lambda_{3}$ is large, the attack success rate decreases rapidly. Our default value of $\lambda_{3}=0.8$ give the best result for balancing the attack success rate and stealthiness.}
  \centering
 \footnotesize
  \begin{tabular}{c||p{8.6ex}<{\centering}|p{9.6ex}<{\centering}|p{8.6ex}<{\centering}|p{8.6ex}<{\centering}|p{8.6ex}<{\centering}|p{8.6ex}<{\centering}}
  \hline
\cline{2-7}
Model & $\lambda_{3}=0$ & $\lambda_{3}=0.08$ & $\lambda_{3}=0.8$ & $\lambda_{3}=4$ & $\lambda_{3}=8$ & $\lambda_{3}=16$ \\
\hline\hline
PointNet & 99.02\% & 98.86\% & 98.45\% & 89.29\% & 78.42\% & 63.01\% \\
PointNet++ & 99.73\% & 99.70\% & 99.58\% & 86.56\% & 80.47\% & 71.96\% \\
DGCNN & 87.28\% & 86.89\% & 84.16\% & 65.92\% & 56.92\% & 42.35\% \\
\hline
   \end{tabular}
  \label{table:table102}
\end{table}

\begin{figure}[!t]
\centering
\includegraphics[width=0.99\columnwidth]{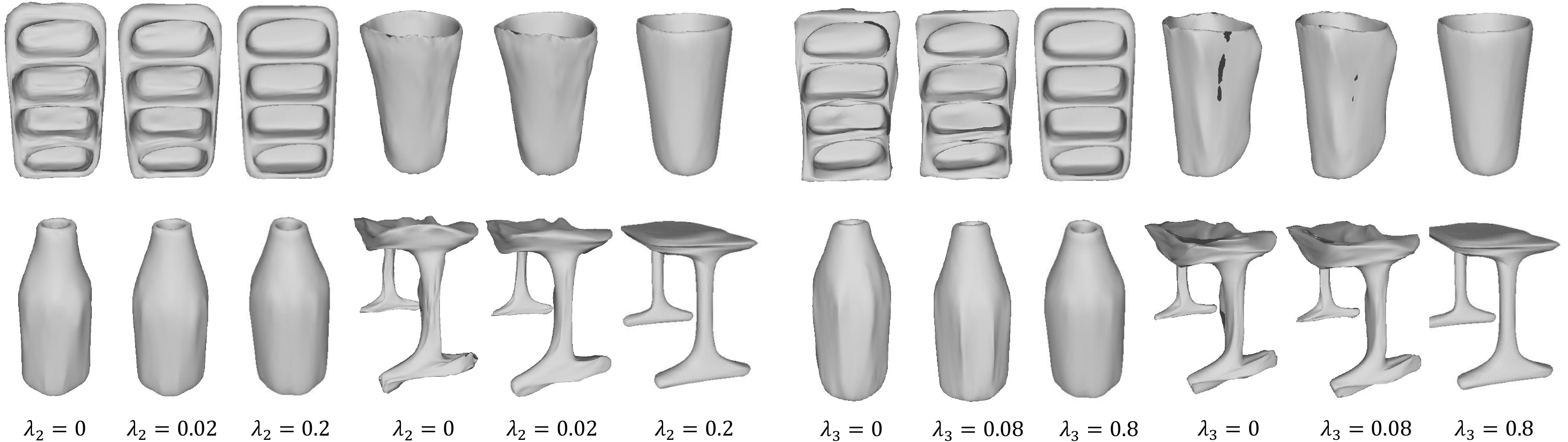}
\caption{Qualitative visualization results of the effects of the penalty parameter $\lambda_{2}$ and $\lambda_{3}$. When $\lambda_{2}$ is adjusted lower, 3D objects show local unevenness and minor self-intersection. When $\lambda_{3}$ is adjusted lower, 3D objects show large areas of self-intersection. Our default value of $\lambda_{2}=0.2$ and $\lambda_{3}=0.8$ guarantees stealthiness.}
\label{fig:fig103}
\end{figure}

\subsection{Evaluation on the Transferability}

We conduct experiments on the transfer-based attacks. We generate 3D adversarial examples against one white-box model and evaluate the black-box attack success rates on the other black-box victim models.
As shown in Table~\ref{table:table104}, our $\epsilon$-ISO attack has much higher success rates than the baselines.
This is because our $\epsilon$-ISO retains the geometric properties of the 3D objects well, without local outliers or anomalous deformations. Thus the crafted adversarial examples more transferable across different models.

\begin{table}[!t]
\caption{The transfer-based attack success rates on three models by various attacks. Our $\epsilon$-ISO attack has much higher success rates than the baselines.}
  \centering
 \footnotesize
  \begin{tabular}{c|c||c|c|c}
  \hline
\multirow{2}{*}{White-box Target Model} & \multirow{2}{*}{Attacks} & \multicolumn{3}{c}{Black-box Victim Model}\\
\cline{3-5}
 & & PointNet & PointNet++ & DGCNN  \\
\hline\hline
 & KNN & - & 11.1\% & 10.7\% \\
PointNet & $GeoA^3$ & - & 11.5\% & 2.5\% \\
 & $\epsilon$-ISO & - & 47.6\% & 35.8\% \\
\hline
 & KNN & 6.4\% & - & 7.9\% \\
PointNet++ & $GeoA^3$ & 9.4\% & - & 19.7\% \\
 & $\epsilon$-ISO & 32.9\% & - & 51.2\% \\
\hline
 & KNN & 7.2\% & 32.2\% & - \\
DGCNN & $GeoA^3$ & 12.4\% & 24.2\% & - \\
 & $\epsilon$-ISO & 55.4\% & 62.7\% & - \\
\hline
   \end{tabular}
  \label{table:table104}
\end{table}

\section{Proof of $\boldsymbol{n}_{u} \wedge \boldsymbol{n}_{v} = K \boldsymbol{r}_{u} \wedge \boldsymbol{r}_{v}$}

The Weingarten map $\mathcal{W}$ is defined by
\begin{equation}
\mathcal{W}: \begin{array}{ll}
T_{P} S & \rightarrow T_{P} S \\
\boldsymbol{v}=\lambda \boldsymbol{r}_{u}+\mu \boldsymbol{r}_{v} & \rightarrow \mathcal{W}(\boldsymbol{v})=-\left(\lambda \boldsymbol{n}_{u}+\mu \boldsymbol{n}_{v}\right)
\end{array}
\end{equation}
Thus, $\mathcal{W}\left(\boldsymbol{r}_{u}\right)=-\boldsymbol{n}_{u},\mathcal{W}\left(\boldsymbol{r}_{v}\right)=-\boldsymbol{n}_{v}$.
The coefficient matrix of the Weingarten map $\mathcal{W}$ is 
$\left[\begin{array}{ll}
a & b \\
c & d
\end{array}\right]$,
where
$\mathcal{W}\left(\boldsymbol{r}_{u}\right)=-\boldsymbol{n}_{u}=a \boldsymbol{r}_{u}+b \boldsymbol{r}_{v} ,
\mathcal{W}\left(\boldsymbol{r}_{v}\right)=-\boldsymbol{n}_{v}=c \boldsymbol{r}_{u}+d \boldsymbol{r}_{v}$.
Take the dot product of each of these equations with $\boldsymbol{r}_{u}$ and $\boldsymbol{r}_{v}$.
This gives
\begin{equation}
\begin{array}{l}
\left\langle-\boldsymbol{n}_{u}, \boldsymbol{r}_{u}\right\rangle=a\left\langle \boldsymbol{r}_{u}, \boldsymbol{r}_{u}\right\rangle+b\left\langle \boldsymbol{r}_{v}, \boldsymbol{r}_{u}\right\rangle \\
\left\langle-\boldsymbol{n}_{u}, \boldsymbol{r}_{v}\right\rangle=a\left\langle \boldsymbol{r}_{u}, \boldsymbol{r}_{v}\right\rangle+b\left\langle \boldsymbol{r}_{v}, \boldsymbol{r}_{v}\right\rangle \\
\left\langle-\boldsymbol{n}_{v}, \boldsymbol{r}_{u}\right\rangle=c\left\langle \boldsymbol{r}_{u}, \boldsymbol{r}_{u}\right\rangle+d\left\langle \boldsymbol{r}_{v}, \boldsymbol{r}_{u}\right\rangle \\
\left\langle-\boldsymbol{n}_{v}, \boldsymbol{r}_{v}\right\rangle=c\left\langle \boldsymbol{r}_{u}, \boldsymbol{r}_{v}\right\rangle+d\left\langle \boldsymbol{r}_{v}, \boldsymbol{r}_{v}\right\rangle
\end{array}
\end{equation}
Since $\boldsymbol{r}_{u}$ and $\boldsymbol{r}_{v}$ are tangent vectors to the surface, $\left\langle\boldsymbol{r}_{u}, \boldsymbol{n}\right\rangle=0, \left\langle\boldsymbol{r}_{v}, \boldsymbol{n}\right\rangle=0$.
Differentiating these equations with respect to $u$ and $v$ gives
\begin{equation}
\begin{array}{l}
\left\langle\boldsymbol{r}_{u u}, \boldsymbol{n}\right\rangle=\left\langle\boldsymbol{r}_{u}, -\boldsymbol{n}_{u}\right\rangle \\
\left\langle\boldsymbol{r}_{u v}, \boldsymbol{n}\right\rangle=\left\langle\boldsymbol{r}_{u}, -\boldsymbol{n}_{v}\right\rangle=\left\langle\boldsymbol{r}_{v}, -\boldsymbol{n}_{u}\right\rangle \\
\left\langle\boldsymbol{r}_{v v}, \boldsymbol{n}\right\rangle=\left\langle\boldsymbol{r}_{v}, -\boldsymbol{n}_{v}\right\rangle
\end{array}
\end{equation}
This gives
\begin{equation}
\begin{array}{l}
L=a E+b F, \quad M=c E+d F,\\
M=a F+b G, \quad N=c F+d G .
\end{array}
\end{equation}
These four scalar equations are equivalent to the single matrix equation
\begin{equation}
\left[\begin{array}{ll}
a & b \\
c & d
\end{array}\right]=\left[\begin{array}{ll}
L & M \\
M & N
\end{array}\right]\left[\begin{array}{ll}
E & F \\
F & G
\end{array}\right]^{-1}
\end{equation}
Thus we can derive
\begin{equation}
\begin{aligned}
\boldsymbol{n}_{u} \wedge \boldsymbol{n}_{v} &=\left(a \boldsymbol{r}_{u}+b \boldsymbol{r}_{v}\right) \wedge\left(c \boldsymbol{r}_{u}+d \boldsymbol{r}_{v}\right) \\
&=(a d-b c) \boldsymbol{r}_{u} \wedge \boldsymbol{r}_{v} \\
&=\operatorname{det}\left(\left[\begin{array}{ll}
L & M \\
M & N
\end{array}\right]\left[\begin{array}{ll}
E & F \\
F & G
\end{array}\right]^{-1}\right) \boldsymbol{r}_{u} \wedge \boldsymbol{r}_{v} \\
&=\frac{\operatorname{det}\left(\left[\begin{array}{ll}
L & M \\
M & N
\end{array}\right]\right)}{\operatorname{det}\left(\left[\begin{array}{ll}
E & F \\
F & G
\end{array}\right]\right)} \boldsymbol{r}_{u} \wedge \boldsymbol{r}_{v} \\
&=\frac{L N-M^{2}}{E G-F^{2}} \boldsymbol{r}_{u} \wedge \boldsymbol{r}_{v} \\
&=K \boldsymbol{r}_{u} \wedge \boldsymbol{r}_{v}
\end{aligned}
\end{equation}

\end{document}